\newcommand{\rsp}{\textsc{Robot Scheduling}~}
\newtheorem{problem}{Problem}
\newcommand{\pairs}{pairs}
\DeclareMathOperator{\dist}{dist}
\title{Fast and Near-Optimal Collision-Free Robot Scheduling On Paths} 
\author{Duncan Adamson}{Department of Computer Science, University of St Andrews, St Andrews, UK}{}{https://orcid.org/0000-0003-3343-2435}{}
\author{Nathan Flaherty}{Leverhulme Research Centre for Functional Materials Design, University of Liverpool, Liverpool, UK}{}{https://orcid.org/0000-0002-2798-4084}{}
\author{Igor Potapov}{School of Computer Science and Informatics, University of Liverpool, Liverpool, UK}{potapov@liverpool.ac.uk}{https://orcid.org/0000-0002-7192-7853}{}
\author{Paul G Spirakis}{School of Computer Science and Informatics, University of Liverpool, Liverpool, UK}{p.spirakis@liverpool.ac.uk}{https://orcid.org/0000-0001-5396-3749}{}
\author{Elena Zamaraeva}{Department of Computer Science, University of Mancester, Manchester, UK}{}{https://orcid.org/0000-0002-7948-2641}{}
\authorrunning{D. Adamson, N. Flaherty, I. Potapov, P.G. Spirakis, E. Zamaraeva} 
\keywords{Integer Linear Programming, Graph Algorithms, Robot Scheduling} 
\begin{document}

\maketitle

\begin{abstract}

In this paper, we address the problem of scheduling a set of robots to complete tasks in a laboratory environment, modelled as a graph, while avoiding collisions. We analyze the dynamic programming algorithm (PA) introduced in \cite{ourPreviousPaper} and present three baselines for comparison: an integer programming approach (IP) that always yields an optimal solution, a greedy algorithm (GA), and a simple randomized algorithm (RA). We show that for a path graph, PA, GA, and RA find solutions several orders of magnitude faster than IP (the optimal baseline), with PA returning optimal results in the vast majority of cases. Our scaled experiments comparing non-optimal algorithms show that the average schedule timespan produced by PA is less than half that of RA and GA. This outperformance is consistent across varying path lengths, task durations and distributions, number and allocations of tasks and robots, and task-to-robot ratios. This work serves two purposes. First, we present three algorithms for scheduling on line graphs, including a novel integer programming formulation for finding optimal solutions. Second, we demonstrate that PA produces near-optimal schedules that outperform all non-optimal baselines while maintaining a comparable runtime. Code is available at \url{https://github.com/sea26-robots/code}.
\end{abstract}


\keywords{}


         
\newcommand{\BibTeX}{\rm B\kern-.05em{\sc i\kern-.025em b}\kern-.08em\TeX}




\maketitle 


\section{Introduction}

Recent years have seen an increase in the use of moving autonomous robots within a range of environments, from manufacturing \cite{Liu2023} to Unmanned Aerial Vehicles \cite{qamar2023trmaxalloc}. We make particular note of our primary motivation, that being the scheduling of robots within chemistry labs and the corresponding significant and expanding body of work concerning robotic chemists. Initial work on these systems focused on building robots performing reactions within fixed environments \cite{granda2018controlling,king2011rise,langner2019ternary,li2015synthesis,macleod2020selfdriving}, however recently Burger et al. \cite{burger2020mobile} have presented a robot capable of moving within a laboratory and completing tasks throughout the space. The works of Burger et al. \cite{burger2020mobile} and Liu et al. \cite{Liu2023} provide the main motivation for this work, namely the \emph{Robot Trajectory Planning Problem}
within a laboratory environment (as presented by Burger et al. \cite{burger2020mobile}) while avoiding collisions (as investigated in the manufacturing context by Liu et al. \cite{Liu2023}).\looseness=-1

Throughout all such applications, there are two 
objectives for robots, minimising the time of completing all tasks and completing tasks safely by avoiding collisions.
%
%
%
%
Under this definition, our problem may be viewed as a variant of the well-known travelling salesman problem.
%
In the other direction, scheduling the robots in such a way that they avoid collision is known to be NP-hard \cite{ourPreviousPaper}, even when the space is simplified to a simple discrete structure such as a tree.\looseness=-1

\textbf{Our Contribution.}
We address the problem of scheduling a set of robots to complete tasks in a laboratory environment, modelled as a graph , while avoiding collisions. 
Our primary experimental result is showing that the algorithm presented in \cite{ourPreviousPaper} is not only optimal in the restricted case studied there, but it is also near-optimal in several other cases, in particular for paths where the duration of tasks varies.
%
We focus on scheduling the robots when all tasks and all robots are on a given path, motivated by the hardness of this problem for general graphs. We first formulate an integer programming model (IP) for this problem and prove its correctness. We then present three scheduling algorithms and compare them with the Partition Algorithm (PA) from \cite{ourPreviousPaper}.
We demonstrate that PA outperforms non-optimal baselines, providing schedules of less than half the timespan on average and confirm its excellent scalability with respect to various parameters, including the path length, the number of tasks, and the duration of the tasks. We show that the PA's performance remains superior compared to the baseline under various conditions, including various tasks and robots' position distributions and fixed task-to-robot ratios.
Finally, we show that even when we restrict ourselves to grid graphs, the scheduling problem remains computationally hard, even with uniform task durations and only a single robot, as does the auxiliary problem of finding a path between all tasks.\looseness=-1

\textbf{Related Work.}
Research on collision-free robot coordination is closely related to the Multi-Agent Pathfinding (MAPF) problem \cite{SHARON201540,standley2010finding,Stern_2021}, where a set of agents must move from given start locations to assigned goal locations without collisions. Anonymized MAPF removes fixed agent–goal assignments, requiring only that each agent reaches any goal from a specified set; however, it retains the one-agent--one-goal requirement. Building on anonymized MAPF, the Multi-Agent Pickup-and-Delivery problem \cite{liu2019task} extends this formulation by introducing tasks composed of pickup and delivery locations, coupling path planning with task assignment and sequencing. 

The collision-free robot scheduling formulation proposed in \cite{ourPreviousPaper} and studied in this paper further extends anonymized MAPF by generalizing a visit to a MAPF target into a task-solving process defined by a required number of sequential time steps spent at the task location. This formulation also removes the one-robot--one-goal limitation typical of MAPF. In \cite{ourPreviousPaper}, the authors proposed PA for the collision-free robot scheduling and showed that, when every task is of uniform duration, PA can find optimal schedules for line, cycle, and tadpole graphs. On the other hand, they showed that, even for uniform duration tasks, this problem is NP-hard in most cases, including trees with multiple agents, or planar graphs with only one agent. Looking more broadly at exploration problems, we can consider the work on temporal graph exploration, both in general \cite{arrighi2023kernelizing,erlebach2021temporal,erlebach_et_al:LIPIcs.ICALP.2019.141,erlebach2022parameterized,michail2016traveling}, and for specific graph classes \cite{adamson2022faster,akrida2021temporal,bodlaender2019exploring,bumpus2023edge,deligkas2022optimizing,erlebach2018faster,erlebach2022exploration,taghian2020exploring}. We note that the structure of temporal graphs is close to the challenges implemented in our graph by agents blocking potential moves from each other, making this work of interest to us.
Particularly relevant to us is the work of Michail and Spirakis \cite{michail2016traveling}, who showed that the problem of determining the fastest exploration of a temporal graph is NP-hard, and, furthermore, that there is no constant factor approximation algorithm for the shortest exploration (in terms of the length of the path found by the algorithm, compared to the shortest path exploring the graph) unless $P = NP$.
Looking at our Integer Programming baseline approach, we note that the travelling salesman problem was one of the first and most heavily studied applications of integer programming \cite{dantzig2016linear}, making this a natural tool for our project.\looseness=-1



\section{Preliminaries}\label{sec:prelim}
A (simple) graph $G=(V,E)$ is a pair consisting of a set of vertices $V$ and a set of edges $E \subseteq V \times V$. A \emph{walk} in a graph $G$ of length $\ell$ is a sequence of $\ell$ edges of the form $(v_1, v_2), (v_2, v_3), \dots, (v_{\ell - 1}, v_{\ell})$. Any walk $w$ can visit the same vertex multiple times and may use the same edge multiple times. 
A walk without any such repetitions is called a \emph{path}. A path which visits all vertices in a graph is called a \emph{Hamiltonian path}.
Given a walk $w = (v_1, v_2), (v_2, v_3), \dots, (v_{\ell - 1}, v_{\ell})$, we denote by $\vert w \vert$ the total number of edges in $w$, and by $w[i]$ the $i^{th}$ edge in $w$. The \emph{distance} between a pair of vertices $v_1, v_2$, denoted $\dist(v_1, v_2)$, is the number of edges in the shortest walk starting at $v_1$ and ending at $v_2$. In this paper, we also allow walks to contain self-adjacent moves, i.e. moves of the form $(v_i, v_i)$ for every vertex in the graph. 
In this problem, we consider a set of agents, which we call \emph{robots}, moving on a given graph $G = (V, E)$ and completing a set of tasks $\mathcal{T} = \{t_1, t_2, \dots, t_m\}$. Each robot has some initial position $sv \in V$ and each takes a single timestep to traverse an edge.
A \emph{task} $t_i$ is a pair $(v_i,d_i)$ where $v_i$ is the vertex the task is located and $d_i$ is the duration of that task (i.e. the number of timesteps a robot needs to be on that task's vertex for in order to complete it).
A \emph{schedule} $C$ for a robot is a sequence alternating between walks and tasks (which could begin and end with any combination of walk and task, task and task, walk and walk, or task and walk) starting at the initial vertex for its robot.
The \emph{timespan} (or makespan) of a schedule is the number of timesteps that schedule takes to be executed, formally if a schedule $C$ consists of tasks $t_1,...,t_m$ and walks $w_1,..,w_p$ then the timespan of $C$ (denoted $\vert C\vert$) is given by $\vert C \vert = \left( \sum_{i \in [1, p]} \vert w_i \vert \right) + \left(\sum_{j \in [1, m]} d_j  \right)$.
The \emph{walk representation} $\mathcal{W}(C)$ of a schedule $C$ is an ordered sequence of edges formed by replacing the task $t_i = (v_i,d_i)$ in $C$ with a walk of length $\vert t_i \vert = d_i$ consisting only of the edge $(v_{i}, v_i)$, then concatenate the walks together in order. 
A set of schedules $\mathcal{C} = C_1,...,C_k$ is said to be task-completing if for each task $t \in \mathcal{T}$ there exists a schedule $C_i$ such that $t \in C_i$.
We call such a set of schedules $\mathcal{C}$ \emph{collision-free} if there is no timestep where the robots traverse the same edge or inhabit the same vertex. In other words, for every pair of schedules $C_i,C_j$ where $i \neq j$ and all timesteps $s \in [\vert C_i \vert ]$, $\mathcal{W}(C_i)[s] = (v, u)$ and $\mathcal{W}(C_j)[s] = (v', u')$ satisfies $u \neq u'$, $v \neq v'$ and $(v, u) \neq (u', v')$.
Given 2 sets of schedules $\mathcal{C}$ and $\mathcal{C}'$, we say $\mathcal{C}$ is \emph{faster} than $\mathcal{C'}$ if $\max_{C_i \in \mathcal{C}} \vert C_i \vert < \max_{C_j' \in \mathcal{C}'} \vert C_j' \vert$.

Given a graph $G = (V, E)$, set of $k$ robots $R_1, R_2, \dots, R_k$ starting on vertices $sv_1, sv_2, \dots$, $sv_k$, and set of tasks $\mathcal{T}$, a \emph{fastest} task-completing, collision-free set of $k$-schedules is the set of schedules $\mathcal{C}$ such that any other set of task-completing, collision-free schedules is no faster than $\mathcal{C}$. Note that there may be multiple such schedules.\looseness=-1
\begin{problem}[$k$-\rsp]
    \label{prob:robot-scheduling}
    Given a graph $G = (V, E)$, set of $k$  robots $R_1, R_2, \dots, R_k$ starting on vertices $sv_1, sv_2, \dots, sv_k$, and set of tasks $\mathcal{T}$, what is the fastest task-completing, collision-free set of $k$-schedules $\mathcal{C} = (C_1, C_2, \dots, C_k)$ such that $C_i$ can be assigned to $R_i$, for all $ i \in [1, k]$?\looseness=-1
\end{problem}

\subsection{$k$-Partition Algorithm}


In this work, we compare the previous \emph{$k$-Partition Algorithm} by Adamson et al. \cite{ourPreviousPaper} for \rsp on path graphs with a set of new algorithmic tools, in particular an integer linear programming formulation guaranteeing optimal solutions. We provide a brief overview of the operation of this algorithm, with full pseudocode provided in Algorithm \ref{alg:pa} in Appendix \ref{app:implementation}.\looseness=-1

At a high level, $k$-partition works by partitioning the path graph into a set of $k$-contiguous, though not necessarily disjoint, sub-instances, each containing a single robot, and some subset of the tasks, with the union of these instance corresponding to the original input. We assume the path is on the vertices $v_1, v_2, \dots, v_n$, with $v_i$ \emph{left} of $v_j$ if $i < j$, and \emph{right} if $j > i$. The algorithm works using a dynamic programming approach, starting with working out the best way to solve this instance for a single robot, by convention the left-most, then using this solution to compute the solution for the two left-most robots and so on.\looseness=-1

More explicitly, we use $SC_{k}(P, \mathcal{T}, \mathcal{S})$ to denote the length of the solution given by the partition algorithm for the instance of $k$-\rsp on the path, $P$, with set of tasks $\mathcal{T} = \{t_1, t_2, \dots, t_m \}$ and set of starting positions $\mathcal{S} = \{s_1, s_2, \dots, s_k\}$. We assume, without loss of generality, that $t_i$ is located left of $t_{i + 1}$, and $s_i$ is left of $s_{i + 1}$. First, for every $i \in [m]$, we compute the value of $SC_{1}(P, \{t_1, t_2, \dots, t_i\}, \{s_1\})$ via the helper function $C(P,\mathcal{T}', s)$, computing the optimal solution for one robot to solve the instance of $1$-\rsp on the graph $P$, set of tasks $\mathcal{T}' = \{(p_1, d_1), (p_2, d_2), \dots, (p_{m'}, d_{m'}) \}$, and starting vertex $v_i$, with $C(P, \mathcal{T}', v_i) = \min(\dist(v_i, p_1), \dist(v_i, p_{m'})) + \dist(p_1, p_{m'}) + \sum_{i \in [m']} d_i$. Therefore, $SC_{1}(P, \{t_1, t_2, \dots, t_i\}, \{s_1\}) = C(P, \{t_1, t_2, \dots, t_i\}, \{s_1\})$.\looseness=-1

In the general case, the value of $SV_{k'}(P, \{t_1, t_2, \dots, t_i\}, \{s_1, s_2, \dots, s_{k'}\})$ is computed from the values of $SV_{k' - 1}(P, \{t_1, t_2, \dots, t_{i'}\}, \{s_1, s_2, \dots, s_{k' - 1}\})$, for every $i' \in [i]$. This is determined by splinting the tasks in $\{t_1, t_2, \dots, t_{i}\}$ between the $k'^{th}$ robot, and the previous $k - 1$ robots, using the known solutions to each sub-instance as a basis. Formally, 
\[SV_{k'}(P, \{t_1, t_2, \dots, t_i\}, \{s_1, s_2, \dots, s_{k'}\}) =\]\[\min_{i' \in [0,i]} \max\left(SV_{k' - 1}(P, \{t_1, t_2, \dots, t_{i'}\}, \{s_1, s_2, \dots, s_{k' - 1}\}), C(P, \{t_{i' + 1}, t_{i' + 2}, \dots, t_{i} \}, s_{k'}) \right),\]
corresponding to the slower of the time required for the robots $R_1, R_2, \dots R_{k' - 1}$ to complete tasks $\{t_1, t_2, \dots, t_{i'}\}$, and the time for robot $R_{k'}$ to complete the tasks $\{t_{i' + 1}, t_{i' + 2}, \dots, t_{i}\}$, for some $i' \in [0, i]$.
A full proof, including the collision-free nature of the corresponding schedule, and the optimality of this algorithm for tasks of equal duration can be found in \cite{ourPreviousPaper}.

\section{Integer Linear Programming Formulation of the Collision-Free Robot Scheduling Problem on a Path}
\label{sec:integer_program}

We introduce our integer linear programming model (IP) aimed to find a fastest, collision-free set of schedules. First, we outline the variables used in the program, before moving on to the constraints and objective function. We prove the correctness of the program as the corresponding constraints are presented.
Note that our IP formulation uses a polynomial number of variables and constraints relative to the size of the input graph, the number of robots and the number of tasks.
The primary variables used in this program to represent the schedule of each robot are the set of binary variables $x_{r, v, t} \in \{0, 1\}$, where $r \in [k], v \in V$ and $t \in [\tau]$. Where $\tau$ is an upper bound on the time a schedule could take, we use $\tau = n\cdot k + \sum_t d_t $. Here $x_{r, v, t} = 1$ iff the $r^{th}$ robot is on vertex $v$ at timestep $t$. We use these variables to construct a schedule for each robot by setting robot $r$ to be at vertex $v$ at timestep $t$ iff $x_{r, v, t} = 1$. If the robot $r$ occupies some vertex $v$ for containing a task requiring $\ell$ timesteps for $\ell$ consecutive timesteps, we assume that $r$ completes the task while at that vertex.
Additionally, we introduce the binary variables $TC_{i, t} \in \{0, 1\}$, for every $i \in [T], t \in [\tau]$ with $TC_{i, t} = 1$ iff the $i^{th}$ task has been completed by timestep $t$. Finally, we use the set of binary variables $AC_{t} \in \{0, 1\}$, for every $t \in [\tau]$ to denote if every task in the graph has been completed by timestep $t$.\looseness=-1

We present our constraints. First, we introduce the \emph{movement constraints}, i.e. the constraints ensuring that the robot is at exactly one position at each timestep, that the robot can only stay in place, or move between adjacent vertices, between sequential timesteps, and that the schedules are collision-free.
\begin{align}
    \sum\limits_{v \in V} x_{r, v, t} = 1,  \forall r \in [k], t \in [\tau]\label{eq:only_one_robot_position_per_timestep}\\
    x_{r, v, t} \leq \sum\limits_{v' \in N(v) \cup \{ v \}} x_{r, v', t - 1},   \forall r \in [k], v \in V, t \in [2, \tau] \label{eq:adjancent_robots}\\
    \sum\limits_{r \in [k]} x_{r, v, t} \leq 1,   \forall v \in V, t \in [\tau]\label{eq:only_one_robot_per_vertex_timestep}\\
    x_{r, v, t} + x_{r, v', t - 1} + x_{r', v, t - 1} + x_{r', v', t} \leq 3, \notag \\
   \forall r \in [k], r' \in [k] \setminus \{ r \}, (v, v') \in E, t \in [\tau]  {}\label{eq:no_crossover}
\end{align}

\begin{lemma}
    \label{lem:collsion_free}
    Constraints \ref{eq:only_one_robot_position_per_timestep}, \ref{eq:adjancent_robots}, \ref{eq:only_one_robot_per_vertex_timestep}, and \ref{eq:no_crossover} ensure that the schedule given by the integer program is collision-free.
\end{lemma}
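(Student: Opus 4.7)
The plan is to unpack the informal notion of a collision-free schedule into four conditions and show that each of the four constraints enforces exactly one of them. I would first fix the working definition: a schedule induced by the $x_{r,v,t}$ variables is collision-free if (i) at every timestep each robot occupies exactly one vertex; (ii) between any two consecutive timesteps, each robot either stays in place or moves to an adjacent vertex; (iii) at every timestep no two robots share a vertex; and (iv) no two robots swap positions along a single edge between two consecutive timesteps.

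Next I would check the correspondence constraint by constraint. Constraint \ref{eq:only_one_robot_position_per_timestep} yields (i) immediately, since for binary $x_{r,v,t}$ summing to $1$ over $v\in V$ forces exactly one vertex to be chosen per $(r,t)$. Constraint \ref{eq:adjancent_robots} gives (ii): whenever $x_{r,v,t}=1$, the inequality forces the sum $\sum_{u\in N(v)\cup\{v\}} x_{r,u,t-1}$ to be at least $1$, and combined with (i) this means robot $r$ occupied $v$ or a neighbour of $v$ at time $t-1$. Constraint \ref{eq:only_one_robot_per_vertex_timestep} is a direct statement of (iii). So the only nontrivial matching is for constraint \ref{eq:no_crossover}, which I expect to be the main subtlety.

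For (iv), I would observe that a swap of robots $r\neq r'$ across an edge $(v,v')$ between times $t-1$ and $t$ is, by definition, the event that $x_{r,v',t-1}=x_{r,v,t}=x_{r',v,t-1}=x_{r',v',t}=1$ simultaneously. Constraint \ref{eq:no_crossover} bounds this sum of binary variables by $3$, and therefore forbids precisely that configuration; since it is imposed for every ordered pair $(r,r')$ with $r\neq r'$, every edge $(v,v')$, and every $t\in[\tau]$, no such swap can occur anywhere in the schedule. The care needed here is to verify that the constraint rules out exactly swaps and does not accidentally exclude legal moves: any single one of the four terms, or any three of them, are clearly compatible with (i)--(iii), and the constraint only activates when all four hold. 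I would conclude by noting that conditions (i)--(iv) together are the definition of collision-freeness used in the paper, so any integer feasible solution to \ref{eq:only_one_robot_position_per_timestep}--\ref{eq:no_crossover} yields a collision-free schedule, leaving correctness of the task-completion variables $TC_{i,t}$ and $AC_t$ to subsequent lemmas.
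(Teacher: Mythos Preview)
Your proposal is correct and follows essentially the same approach as the paper: you match each of the four constraints to one of the four conditions (unique position, adjacency of consecutive positions, no shared vertex, no edge swap) and verify in particular that Constraint~\ref{eq:no_crossover} is violated exactly when all four indicator variables equal~$1$. The only extra care you take---noting that the swap constraint does not exclude any legal move---is a small elaboration the paper omits, but otherwise the arguments coincide.
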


\begin{proof}
     Constraint (\ref{eq:only_one_robot_position_per_timestep}) ensures that each robot occupies exactly one position during each timestep, as otherwise $\sum\limits_{v \in V} x_{r, v, t} > 1$, if $r$ occupies two positions, or, $\sum\limits_{v \in V} x_{r, v, t} = 0$, if $r$ is not assigned any position.  Constraint (\ref{eq:adjancent_robots})  ensures that robot $r$ can be at vertex $v$ at timestep $t$ iff $r$ was at $v$, or some vertex neighbouring $v$, in the previous timestep, for any timestep in $[2, \tau]$, i.e. any timestep other than the first one. Next Constraint (\ref{eq:only_one_robot_per_vertex_timestep}) guarantees that no vertex can be occupied by more than one robot at any given timestep. Finally, Constraint (\ref{eq:no_crossover}) ensures that, given any pair of robots $r, r' \in [k]$ where $r \neq r'$, we ensure that $r$ and $r'$ do not attempt to cross the same edge in the same timestep. Explicitly, if $ x_{r, v, t} + x_{r, v', t - 1} + x_{r', v, t - 1} + x_{r', v', t} = 4$, then $r$ moves from $v'$ to $v$ in timestep $t$, and $r'$ moves from $v$ to $v'$ in timestep $t$. Otherwise, one or both of $r$ and $r'$ do not use $(v, v')$ in timestep $t$.
    Therefore, the schedule is collision-free.\looseness=-1
\end{proof}

We present our constraints regarding the completion of tasks.

\begin{align}
    TC_{i, t} \leq TC_{i, t - 1} + \max_{r \in [k]} \left(\sum\limits_{j \in [t - d_i, t]} x_{r, v_i, j} / d_i\right) & & \forall i \in [T], t \in [\tau]\label{eq:note_when_tasks_are_complete}\\
    TC_{i, t} \geq TC_{i, t - 1} & & \forall i \in [T], t \in [\tau] \label{eq:tasks_stay_completed}
\end{align}


\begin{lemma}
    \label{lem:TC_constraints_work}
    The value of $TC_{i, t}$ is one only if some robot has completed task $i$ by timestep $t$.
\end{lemma}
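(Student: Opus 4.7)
The approach is an induction on $t$ that converts the inequality in Constraint~(\ref{eq:note_when_tasks_are_complete}) into the existence of a robot $r$ and a time $t^\star \le t$ such that $r$ occupies $v_i$ for $d_i$ consecutive timesteps ending at $t^\star$; by the authorial convention in the paragraph introducing the $x$-variables, this means task $i$ has been completed by time $t$. For the base case (in particular $t < d_i$) the sum $\sum_{j \in [t-d_i,t]} x_{r,v_i,j}$ involves $x$-variables that are not defined, so the max in Constraint~(\ref{eq:note_when_tasks_are_complete}) cannot reach $1$, forcing $TC_{i,t}=0$ and making the implication vacuous.

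For the inductive step, assume the claim for all smaller timesteps and suppose $TC_{i,t}=1$. Constraint~(\ref{eq:note_when_tasks_are_complete}) yields
\[
1 \;\le\; TC_{i,t-1} + \max_{r \in [k]} \frac{1}{d_i}\sum_{j \in [t-d_i,t]} x_{r,v_i,j}.
\]
If $TC_{i,t-1}=1$, the inductive hypothesis immediately produces a robot that has completed task $i$ by time $t-1 \le t$, and we are done. Otherwise $TC_{i,t-1}=0$, so the max term must be at least $1$, and some $r$ satisfies $\sum_{j \in [t-d_i,t]} x_{r,v_i,j} \ge d_i$. Since each summand is binary and (reading $[t-d_i,t]$ as the length-$d_i$ window ending at $t$) the window contains exactly $d_i$ timesteps, every $x_{r,v_i,j}$ in that range must equal $1$, i.e.\ robot $r$ sits on $v_i$ for $d_i$ consecutive timesteps culminating at $t$, completing task $i$. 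One also has to combine this with Constraint~(\ref{eq:only_one_robot_position_per_timestep}) to conclude that the robot is genuinely on $v_i$ throughout (and not on some other vertex simultaneously), but that is immediate.

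\textbf{Main obstacle.} The delicate point is turning the purely additive condition $\sum x_{r,v_i,j}\ge d_i$ into a stretch of $d_i$ \emph{consecutive} timesteps on $v_i$. The argument is clean only once the window length is fixed to be exactly $d_i$; if $[t-d_i,t]$ were read inclusively as $d_i+1$ timesteps, the sum could be satisfied with a single gap in the middle of the window, in which case I would need to additionally invoke the movement Constraint~(\ref{eq:adjancent_robots}) and argue that the only way to reach sum $d_i$ consistently with the movement rules still yields a $d_i$-length consecutive block adjacent to $t$, or else tighten the definition of the window. I would flag this indexing convention as the one subtlety to pin down before writing the final proof; once settled, the induction above goes through without further difficulty.
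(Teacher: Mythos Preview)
Your proposal is correct and follows essentially the same approach as the paper: a case split (implicitly an induction on $t$) on whether $TC_{i,t-1}=1$, and when it is not, reading off from the sum condition a robot that has sat on $v_i$ for $d_i$ consecutive steps. The paper's own proof leaves the induction implicit and does not discuss the window-length ambiguity you flag, so your write-up is, if anything, more careful than the original.
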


\begin{proof}
    From Constraint (\ref{eq:note_when_tasks_are_complete}) it follows that the value of $TC_{i, t}$ is one only if either $TC_{i, t - 1} = 1$ or $\sum\limits_{j \in [t - d_i, t]} x_{r, v_i, j} / d_i = 1$, for some robot $r$. Note that, by our construction, we assume that if robot $r$ remains on the vertex $v_i$ for $d_i$ timesteps, then it will complete task $i$, therefore, if $\sum\limits_{j \in [t - d_i, t]} x_{r, v_i, j} / d_i = 1$ for any robot $r \in [k]$, the task $i$ must be completed. Further, by Constraint (\ref{eq:tasks_stay_completed}), if $TC_{i, t - 1} = 1$, indicating that the task has been completed, then $TC_{i, t} = 1$.\looseness=-1
\end{proof}

\begin{lemma}
    \label{lem:TC_constaints_allow_the_right_answer}
    Given any schedule where task $i$ is completed by timestep $t$, the value of $TC_{i,t}$ may be set to one.
\end{lemma}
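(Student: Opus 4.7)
The plan is to prove this by induction on the timestep $t$, exhibiting an explicit assignment to the $TC_{i, \cdot}$ variables that respects both Constraint \ref{eq:note_when_tasks_are_complete} and Constraint \ref{eq:tasks_stay_completed}. Let $t^*_i$ denote the first timestep by which task $i$ is completed in the given schedule. I will set $TC_{i, t'} = 0$ for $t' < t^*_i$ and $TC_{i, t'} = 1$ for $t' \geq t^*_i$, and verify that this assignment is feasible.

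For timesteps $t' < t^*_i$, both constraints are trivially satisfied: Constraint \ref{eq:tasks_stay_completed} reduces to $0 \geq 0$, and the right-hand side of Constraint \ref{eq:note_when_tasks_are_complete} is non-negative, so $TC_{i, t'} = 0$ causes no violation. The base case of the induction is $t' = t^*_i$. By the definition of task completion, there exists some robot $r$ that occupies $v_i$ during the $d_i$ consecutive timesteps ending at $t^*_i$, so the sum $\sum_{j \in [t^*_i - d_i, t^*_i]} x_{r, v_i, j}$ is at least $d_i$, and consequently the maximum in Constraint \ref{eq:note_when_tasks_are_complete} is at least $1$. Hence the right-hand side is at least $TC_{i, t^*_i - 1} + 1 \geq 1$, and setting $TC_{i, t^*_i} = 1$ is feasible. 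Constraint \ref{eq:tasks_stay_completed} holds because $TC_{i, t^*_i - 1} \in \{0, 1\}$ and $TC_{i, t^*_i} = 1$.

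For the inductive step, suppose $TC_{i, t' - 1} = 1$ for some $t' > t^*_i$. Then Constraint \ref{eq:tasks_stay_completed} permits $TC_{i, t'} = 1$, and the right-hand side of Constraint \ref{eq:note_when_tasks_are_complete} is $TC_{i, t' - 1} + \max_{r}(\cdots) \geq 1$, so $TC_{i, t'} = 1$ is feasible regardless of whether any robot is currently at $v_i$. This closes the induction.

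I do not anticipate a substantial obstacle here; the only subtlety is the boundary between the uncompleted and completed regimes at $t^*_i$, which is handled by observing that the maximum in Constraint \ref{eq:note_when_tasks_are_complete} evaluates to exactly $1$ for the robot that has just finished executing task $i$. Combined with Lemma \ref{lem:TC_constraints_work}, this lemma will show that the $TC_{i, t}$ variables faithfully indicate completion of task $i$ by timestep $t$.
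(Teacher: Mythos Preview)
Your proof is correct and follows essentially the same approach as the paper: identify the first timestep $t^*_i$ at which task $i$ is completed, use Constraint~\ref{eq:note_when_tasks_are_complete} to justify setting $TC_{i,t^*_i}=1$, and then propagate forward to all later timesteps (the paper phrases this last step simply as ``by extension''). Your version is more explicit---checking feasibility of the all-zero prefix and framing the propagation as an induction---but the underlying argument is identical.
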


\begin{proof}
    Observe that task $i$ can be completed at timestep $t$ if there exists some robot $r$ and timestep $t' \in [t]$ such that $x_{r, i j} = 1$, for every $j \in [t' - d_i, t']$. Therefore, by Constraint (\ref{eq:note_when_tasks_are_complete}), $TC_{i, t'} = 1$. By extension, if $TC_{i, t'} = 1$ then, for any $t'' \in [t' + 1, \tau]$, we can set $TC_{i, t''}$ to $1$.\looseness=-1
\end{proof}

Finally, we provide our constraint for the all complete $AC_t$ variable.

\begin{align}
    AC_t \leq \sum\limits_{i \in [\vert T \vert]} TC_{i, t} / \vert T\vert & & \forall t \in [\tau]\label{eq:all_complete}\\
    \sum_{t \in [\tau]} AC_t = 1 \label{eq:only_one_AC}
\end{align}

\begin{lemma}
    \label{lem:AC}
    The value of $AC_t$ is $1$ only if every task has been completed by timestep $t$.
\end{lemma}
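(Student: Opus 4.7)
The plan is to chain Constraint \ref{eq:all_complete} with Lemma \ref{lem:TC_constraints_work}. Suppose $AC_t = 1$. Then by Constraint \ref{eq:all_complete} we have $1 = AC_t \leq \sum_{i \in [T]} TC_{i,t}/T$, which rearranges to $\sum_{i \in [T]} TC_{i,t} \geq T$.

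Next I would use the fact that each $TC_{i,t}$ is a binary variable in $\{0,1\}$, so a sum of $T$ such variables being at least $T$ forces every summand to equal $1$. Thus $TC_{i,t} = 1$ for every $i \in [T]$.

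Finally, I would invoke Lemma \ref{lem:TC_constraints_work}, which states that $TC_{i,t} = 1$ only if some robot has completed task $i$ by timestep $t$. Applying this for each $i \in [T]$ yields that every task has been completed by timestep $t$, which is exactly the claim.

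There is no real obstacle here: the statement is a one-line consequence of the previously established meaning of the $TC_{i,t}$ variables and the single inequality defining $AC_t$. The only thing to be careful about is the direction of the implication — the lemma is a ``only if'', matching the ``$\leq$'' in Constraint \ref{eq:all_complete}; the reverse direction (that $AC_t$ \emph{may} be set to $1$ whenever all tasks are complete) would be a separate companion lemma analogous to Lemma \ref{lem:TC_constaints_allow_the_right_answer}, and is not required here.
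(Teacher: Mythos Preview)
Your proof is correct and follows essentially the same approach as the paper: use Constraint~\ref{eq:all_complete} together with the binary nature of the $TC_{i,t}$ to force $TC_{i,t}=1$ for all $i$, then appeal to the prior lemma on $TC_{i,t}$ to conclude every task is complete. Your version is in fact slightly more careful than the paper's, which glosses over the $\sum TC_{i,t}\geq T \Rightarrow TC_{i,t}=1$ step and (by an apparent typo) cites Lemma~\ref{lem:TC_constaints_allow_the_right_answer} where it should cite Lemma~\ref{lem:TC_constraints_work}; you have the right reference.
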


\begin{proof}
    From Constraint (\ref{eq:all_complete}), $AC_t$ can be $1$ if and only if $TC_{i, t} = 1$, $\forall i \in [T]$. From Lemma \ref{lem:TC_constaints_allow_the_right_answer}, the value of $TC_{i, t}$ is one only if the task $i$ has been complete by timestep $t$.
    Therefore, $AC_t$ is one only if all tasks have been complete.
\end{proof}

\begin{theorem}
    The integer program formulated with Constraints (\ref{eq:only_one_robot_position_per_timestep}), (\ref{eq:adjancent_robots}), (\ref{eq:only_one_robot_per_vertex_timestep}),
(\ref{eq:no_crossover}),
(\ref{eq:note_when_tasks_are_complete}),
(\ref{eq:tasks_stay_completed}),
(\ref{eq:all_complete}) and the objective $$\textrm{Minimise} \sum_{t \in [\tau]}t\cdot AC_t$$
correctly finds the fastest, collision-free, task completing schedule for a given instance of $k$
\end{theorem}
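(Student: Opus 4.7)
The plan is to prove the theorem in two directions: first, that every feasible solution of the integer program corresponds to a valid, collision-free schedule whose length is determined by the objective value, and second, that every valid, collision-free schedule can be encoded as a feasible IP solution with the corresponding objective value. Together these imply that the optimal objective value identifies the fastest collision-free schedule completing all tasks.

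For the forward direction, I would fix any feasible assignment of the variables $x_{r,v,t}$, $TC_{i,t}$ and $AC_t$. By Lemma \ref{lem:collsion_free}, the $x$ variables induce a well-defined trajectory for each robot in which no two robots occupy the same vertex simultaneously and no two robots cross the same edge in opposite directions in the same timestep; hence the induced schedule is collision free and respects the adjacency structure of $G$. By Lemma \ref{lem:TC_constraints_work}, whenever $TC_{i,t}=1$ some robot has in fact spent $d_i$ consecutive timesteps at the vertex of task $i$ by time $t$, so the task is genuinely completed. By Lemma \ref{lem:AC}, whenever $AC_t=1$ we have $TC_{i,t}=1$ for every task $i$, and therefore every task is complete by time $t$.

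For the reverse direction, given any collision-free schedule of length $\tau^\star$ that completes every task, I would define $x_{r,v,t}=1$ exactly when robot $r$ occupies $v$ at time $t$. Constraints \ref{eq:only_one_robot_position_per_timestep}--\ref{eq:no_crossover} are satisfied directly from the definition of a valid collision-free schedule. Using Lemma \ref{lem:TC_constaints_allow_the_right_answer}, for every task $i$ completed by some timestep $t_i^\star \le \tau^\star$ I may set $TC_{i,t}=1$ for all $t \ge t_i^\star$ while respecting Constraints \ref{eq:note_when_tasks_are_complete} and \ref{eq:tasks_stay_completed}. Then, for every $t \ge \tau^\star$, Constraint \ref{eq:all_complete} allows $AC_t=1$. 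This exhibits a feasible IP solution whose objective value is at least $\tau - \tau^\star + 1$.

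Finally, I would tie the two directions together through the objective. Since $TC_{i,t}$ is non-decreasing in $t$ by Constraint \ref{eq:tasks_stay_completed}, the first time $t^\circ$ at which $AC_t$ can equal $1$ is exactly the first time at which every task is complete, and from that point onward $AC_t$ can be set to $1$ for every later $t$. A maximising solution therefore achieves $\sum_{t \in [\tau]} AC_t = \tau - t^\circ + 1$, so maximising the objective is equivalent to minimising $t^\circ$, the makespan of the schedule. Combining this with the two correspondences above shows that the optimal IP value corresponds precisely to the shortest collision-free schedule completing all tasks. The main obstacle I anticipate is the careful bookkeeping in the reverse direction: one must check that the window $[t-d_i,t]$ in Constraint \ref{eq:note_when_tasks_are_complete} is handled correctly at the boundary (in particular for tasks completed early), and that setting $TC_{i,t}$ greedily to $1$ as soon as permissible does not conflict with Constraints \ref{eq:note_when_tasks_are_complete}--\ref{eq:tasks_stay_completed} at any later timestep, which is where the subtlety of the construction lies.
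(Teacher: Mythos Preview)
Your proposal is correct and follows essentially the same approach as the paper: both arguments invoke Lemmas~\ref{lem:collsion_free}, \ref{lem:TC_constraints_work}, \ref{lem:TC_constaints_allow_the_right_answer}, and~\ref{lem:AC} to establish the two-way correspondence between feasible IP solutions and valid collision-free schedules, and then observe that maximising $\sum_{t\in[\tau]} AC_t$ is equivalent to minimising the first timestep at which all tasks are complete. Your write-up is somewhat more explicit about the reverse encoding and the monotonicity argument tying the objective to the makespan, but the underlying structure is the same.
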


\begin{proof}
    From Lemmas \ref{lem:collsion_free}, \ref{lem:TC_constraints_work}, 
\ref{lem:TC_constaints_allow_the_right_answer}, and \ref{lem:AC} we have that the program computes a valid collision-free schedule, with $AC_t = 1$ only if every task has been completed by timestep $t$. 
Now by Constraint \ref{eq:only_one_AC} exactly one value of $AC_t$ can equal 1, and by the objective function we seek to minimise the value of $t$ such that $AC_t =1$. 
Following Lemma \ref{lem:TC_constaints_allow_the_right_answer}, for any given schedule such that task $i$ is completed by timestep $t$, there is a valid assignment of variables such that $TC_{t, i} = 1$. Therefore, given any schedule such that every task is complete by timestep $t$, there is an assignment of variables such that $AC_t = 1$. 
Thus, by finding the assignment such that $\sum_{t \in [\tau]}t  \cdot AC_t$ is minimised, we find the fastest collision-free schedule that completes all the tasks in the minimum number of timesteps, giving the proof.
\end{proof}


\begin{lemma}\label{lem:traverse}
Let $\mathcal{C}$ be a non-collision-free set of schedules $C_1,\dots,C_k$ of robots $R_1,\dots,R_k$ such that there is no timestep in which the robots inhabit the same vertex. Then there exists a collision-free task-completing set of schedules $\mathcal{C}'$ of the same timespan as the timespan of $\mathcal{C}$.
\end{lemma}
\begin{proof}
If $\mathcal{C}$ is a non-collision-free set of schedules and none of the robots inhabit the same vertex at the same timestep, then there exists a pair of robots $R_j, R_{j+1}$ and an edge $(v_i,v_{i+1})$ such that $R_j$ and $R_{j+1}$ traverse $(v_i,v_{i+1})$ at some timestep $t$. Let $C_j(1:t)$, $C_{j+1}(1:t)$ and $C_j(t+1:)$, $C_{j+1}(t+1:)$ be the schedules of $R_j, R_{j+1}$ before and after the traversing of the edge $(v_i,v_{i+1})$ respectively. We replace the schedule $C_j$ for $R_j$ with the schedule $C_j'$ constructed as $C_j(1:t)$ followed by $C_{j+1}(t+1:)$ (see Figure \ref{fig:traverse}. We also replace the schedule $C_{j+1}$ for $R_{j+1}$ with the schedule $C_{j+1}'$ constructed as $C_{j+1}(1:t)$ followed by $C_{j}(t+1:)$. The maximum timespan of schedules $C_j'$ and $C_{j+1}'$ is the same as that of $C_j$ and $C_{j+1}$, and robots $R_j$ and $R_{j+1}$ do not traverse the edge $(v_i,v_{i+1})$ at the timestep $t$. We can proceed the same way with all other pairs of robots traversing the same edge at the same timestep to resolve the collisions caused by traversing until we achieve the collision-free set of schedules $\mathcal{C}'$ that is task-complete and has the same timespan as $\mathcal{C}$.
\end{proof}


In this way, inequalities (\ref{eq:no_crossover}) in IP can be replaced by resolving collisions cased by traversing in a way described in Lemma \ref{lem:traverse} to achieve a collision-free set of schedules of the same timespan.

\section{Greedy and randomized scheduling algorithms}

We start with introducing our greedy scheduling algorithm (GA) that prioritizes assigning the shortest tasks to the nearest robots first. The algorithm works iteratively, starting with the initial set of empty schedules $\mathcal{C}=(C_1,\dots,C_k)$. We denote by $fv_{C_j}$ the final vertex in the schedule $C_j$ if the schedule is not empty, and $sv_j$ otherwise. 
On each iteration of the algorithm, we consider the list $\pairs$ of all possible pairs $(t_i, R_j)$ for all tasks $t_i \in \mathcal{T}$ that are not assigned yet and all robots $R_j$. Then $\pairs$ is sorted in the increasing order of the sum $d_i + \dist(v_i, w_{C_j})$, where $dis$ is the distance between two vertices on the path. We go through the list $\pairs$ and for each pair $(t_i, R_j)$ extend the schedule $C_j$ to the new schedule $C_j'$ by adding the walk from $fv_{C_j}$ to $v_i$ and the task $t_i$ to  $C_j$. We perform the collision-free test on the set of schedules $(C_1,\dots,C_{j-1},C_j',C_{j+1},\dots,C_k)$. If the new set of schedules is collision-free, we set $\mathcal{C}=(C_1,\dots, C_{j-1}, C_j', C_{j+1},\dots, C_k)$, remove from the list $\pairs$ all pairs containing the task $t_i$ and go to the next iteration. If the new set of schedules is not collision-free, we consider the next pair in $\pairs$. The algorithm terminates when the list $\pairs$ is empty, meaning that all tasks are assigned to robots in the set of schedules $\mathcal{C}$. The pseudocode of the algorithm is presented in Appendix \ref{app:implementation}, Algorithm \ref{ref:alg_greedy}.

\begin{figure*}[t]
\scriptsize
\begin{subfigure}{0.32\linewidth}
  \centering
\includegraphics[width=1\linewidth]{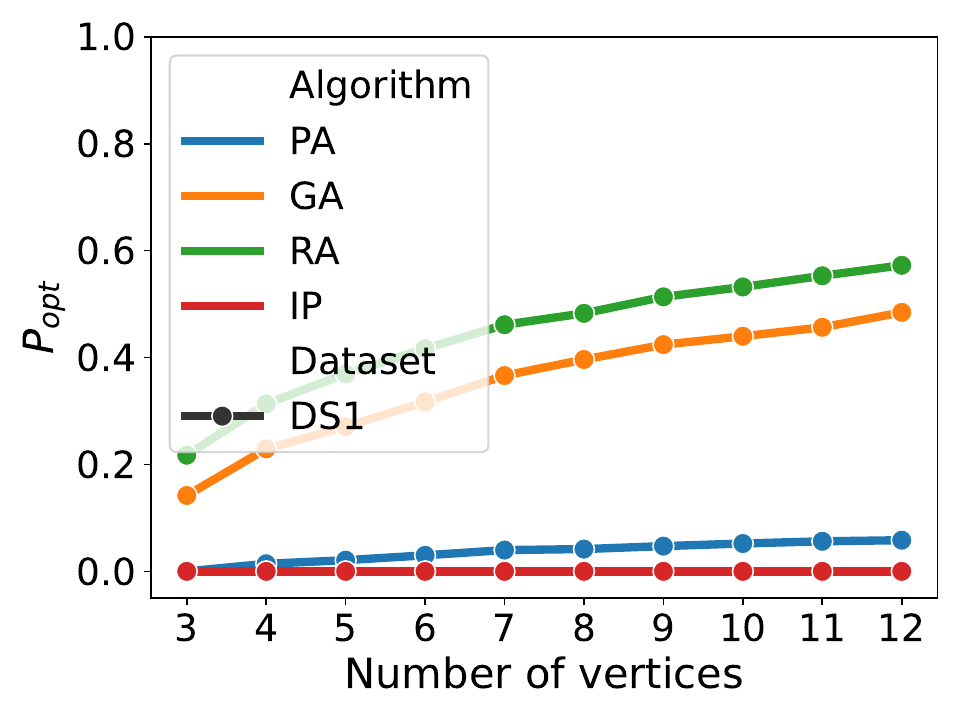}
\end{subfigure}
\begin{subfigure}{0.32\linewidth}
\includegraphics[width=1\linewidth]{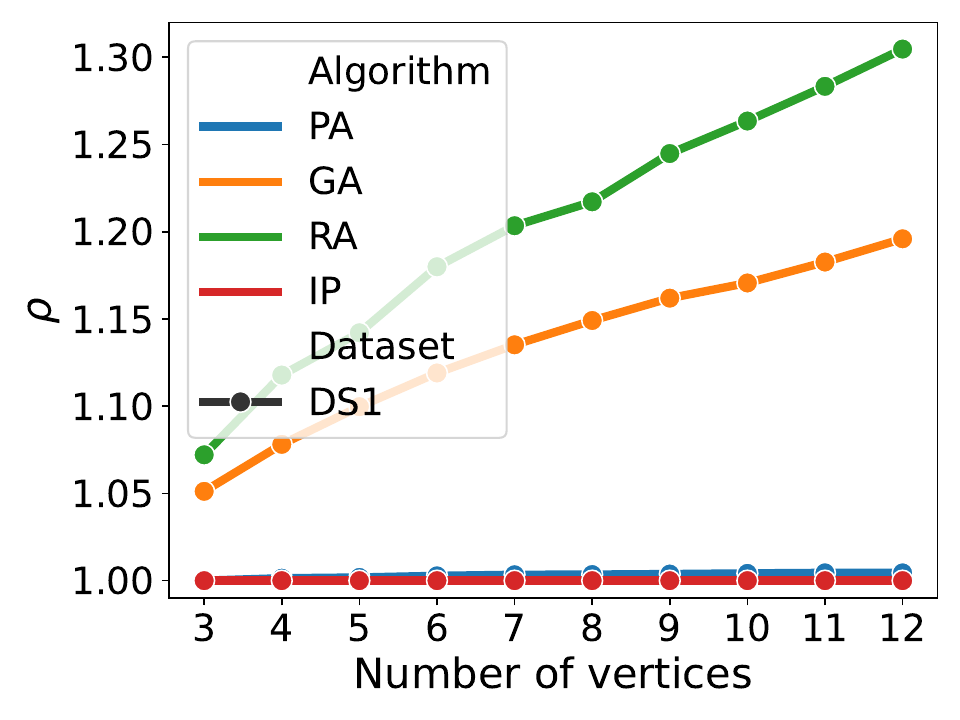}
\end{subfigure}
  \begin{subfigure}{0.32\linewidth}
\includegraphics[width=1\linewidth]{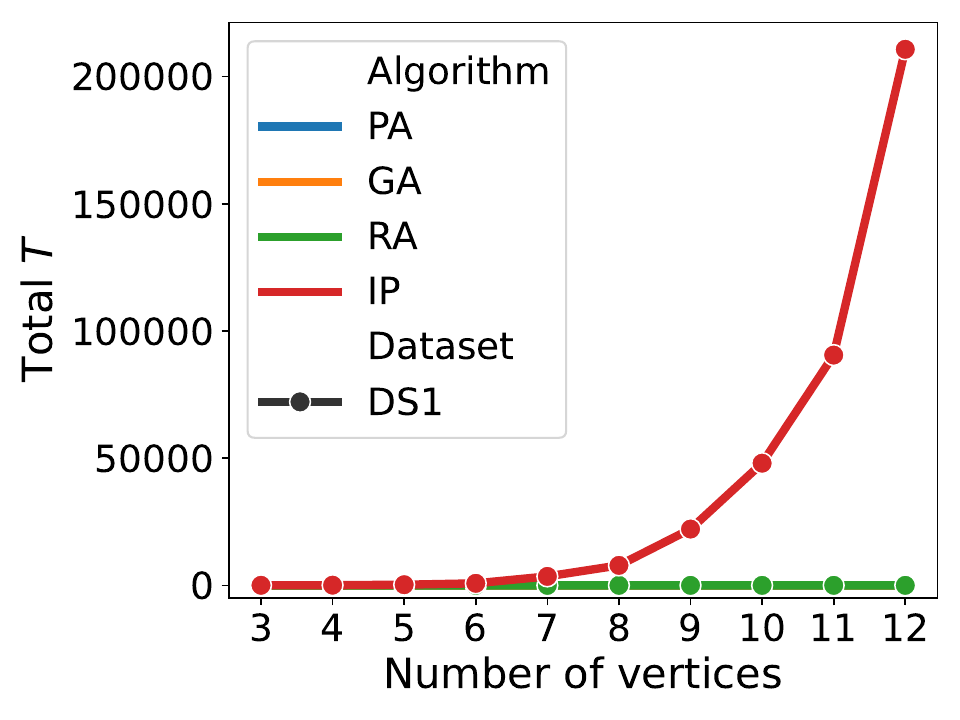}
  \end{subfigure}
  \caption{The evolution of $P_{opt}$ (left), $\rho$ (middle), and total $T$ (right) by the path graph order.}
  \label{fig:non_opt_prop} 
\end{figure*}

\begin{lemma}\label{lem:GA-col-free}
The set of schedules provided by GA is task-completing and collision-free.
\end{lemma}
\begin{proof}
First, we note that GA terminates only when all tasks have been assigned; hence, the set of schedules returned by GA is task-completing. Second, we observe that the algorithm starts with a set of collision-free schedules containing no tasks, and at each step accepts an updated schedule only if the entire set remains collision-free. Therefore, the set of schedules returned by the algorithm is guaranteed to be collision-free.
\end{proof}

\begin{lemma}\label{lem:GA-col-free-test}
The collision-free test in GA can be done in $O(n \cdot m \cdot k \cdot d_{max})$ time, where $n$ is the order of the path graph, $m$ is the number of tasks, $k$ is the number of robots, and $d_{max}$ the upper limit on the duration of a task.
\end{lemma}
\begin{proof}
Assume that we have a collision-free set of schedules $\mathcal{C}=(C_1$, $\dots$, $C_k)$, the task $t_i$ that is not covered by $C$ yet, and the schedule $C_j'$ which is formed by extension of $C_j$ on task $t_i$. We want to check whether the set of schedules $\mathcal{C}'=(C_1$, $\dots$, $C_{j-1}$, $C_j'$, $C_{j+1}$, $\dots$, $C_k)$ is collision-free. Note that the set $\mathcal{C}'$ is collision-free up to the timestep $|C_j|$, hence we need to check the schedules for collision only from the timestep $|C_{j}|+1$. We also know that if $\mathcal{C}'$ is not collision-free, then the collision is between $R_j$ and some other robot. We first check whether $R_j$ is in a collision with any other robot from the timestep $|C_j|+1$ to the timestep $|C_j'|$. We can estimate $|C_j'| - (|C_j|+1)= O(n+d_{max})$ as the maximum possible length of the walk to the task $t_i$ is $n-1$ and $d_i \leq d_{max}$. For each timestep from $|C_j|$ to $|C_j'|$, we check whether $R_j$ inhabits the same vertex or traverses the same edge as any other robot between the current and next timestep. This gives us $O(k(n +d_{max}))$ comparisons, each of which takes a constant time. Then, we want to check whether any other robot inhabits the vertex of $t_i$ after the timestep $|C_j'|$. The maximum possible timespan of the schedule is at most $O(n \cdot m \cdot d_{max})$, thus, as we have $k$ robots, the collision check requires $O(n \cdot m \cdot k \cdot d_{max})$ time.\looseness=-1
\end{proof}

\begin{theorem}\label{th:ga}
Given a path graph of with $n$ vertices, a set of tasks $\mathcal{T}=\{t_1$, $\dots$, $t_m\}$, and the set of robots $R_1,\dots,R_k$, GA returns a collision-free task-completing set of schedules for these robots in $O(m \cdot m^3 \cdot k^2 \cdot d_{max})$ time.
\end{theorem}
\begin{proof}
Following Lemma \ref{lem:GA-col-free}, if GA returns a set of schedules then this set is collision-free and task-completing.
Now, we show that on each iteration, the size of the list $\pairs$ decreases, and the algorithm returns a collision-free schedule after $m$ iterations. To prove this, it suffices to show that on each iteration all pairs containing one of the tasks are removed from $\pairs$. To this aim we show that for a collision-free set of schedules $\mathcal{C}=(C_1,\dots,C_k)$ and any task $t_i$ that is not included in $\mathcal{C}$ yet there exists $R_j$, such that extending the schedule $C_j$ to $C_j'$ by adding $t_i$ will lead to the set of collision free schedules $(C_1,\dots,C_{j-1},C_j',C_{j+1},\dots,C_k)$. Indeed, let $\mathcal{C}=(C_1,\dots,C_k)$ be the set of collision-free schedules and let $t_i$ that is not included in $\mathcal{C}$. We consider the robot $R_j$ closest to $t_i$ at the after completing the set of schedules $\mathcal{C}$. Without loss of generality, we may assume that the last position $fv_{C_j}$ of the robot $R_j$ is at the right to the task $t_i$. We consider the robot $R_{j-1}$ with its schedule $C_{j-1}$ and notice that its last position cannot be between $t_i$ and $fv_{C_j}$, otherwise the distance from $t_i$ to $R_{j-1}$ would be minimum; hence, the position of $t_i$ lies between the last positions of $R_{j-1}$ and $R_j$. Depending on the timespans of the schedules $C_j$ and $C_{j-1}$ we consider two cases.
First, we assume $|C_j| \geq |C_{j-1}|$. In this case, the robots do not change their positions from the timestep $|C_{j}|$ onwards, hence, in the set of schedules $\mathcal{C}$ the part of the part between $R_{j-1}$ and $R_{j}$ is free of robots after the timestep $|C_{j}|$. Adding $t_i$, which lies in this part of the path, to the schedule of $R_j$ leads to the extended schedule $C_j'$ and the set of schedules $(C_1,\dots,C_{j-1},C_j',C_{j+1},\dots,C_k)$ is collision-free.
Now, let $|C_{j}| < |C_{j-1}|$. Following the same arguments as before, we can show that the robots do not change their positions from the timestep $|C_{j-1}|$ onwards and that adding $t_i$ to the schedule of $R_{j-1}$ allows to get the extended schedule $C_{j-1}'$ and the collision-free set of schedules $(C_1,\dots,C_{j-2},C_{j-1}',C_{j},\dots,C_k)$. We conclude that on each iteration where exists a pair $(t_i,R_j)\in pairs$ such that including $t_i$ into the schedule of $R_j$ leads to the updated set of schedules that will pass the collision-free test.

Finally, we turn to the algorithm's work time. On each of the $m$ iterations, we first sort the $\pairs$ list of $O(m\cdot k)$ elements for $O(m\cdot k \log(m\cdot k))$ time and then for at most $mk$ times perform the collision-free test which, by Lemma \ref{lem:GA-col-free-test}, can be done in $O(n \cdot m \cdot k \cdot d_{max})$ time. Hence, the algorithm will return the set of schedules in $O(n \cdot m^3 \cdot k^2 \cdot d_{max})$ time.
\end{proof}

We complete this section by introducing the randomized scheduling algorithm (RA), which repeats the logic of GA except that the order of the list $\pairs$ in RA is randomised rather than ordered as in GA. As Lemmas \ref{lem:GA-col-free} and \ref{lem:GA-col-free-test}, as well as Theorem \ref{th:ga}, do not use the order of elements in $\pairs$ in their proofs, they are applicable to RA as well as GA. For conciseness, we provide the pseudocode of RA as Algorithm \ref{ref:alg_greedy} in Appendix \ref{app:implementation}.

\begin{figure*}[t]
  \centering
  \scriptsize
  \begin{subfigure}{0.26\linewidth}
\includegraphics[width=1\linewidth]{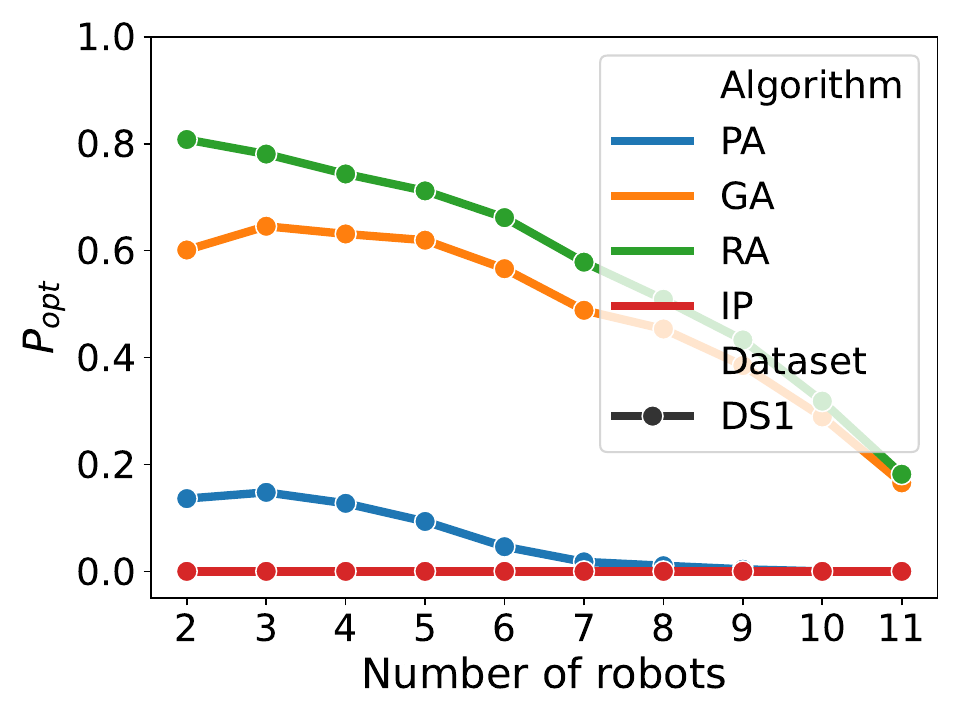}
  \end{subfigure}
  \begin{subfigure}{0.26\linewidth}
\includegraphics[width=1\linewidth]{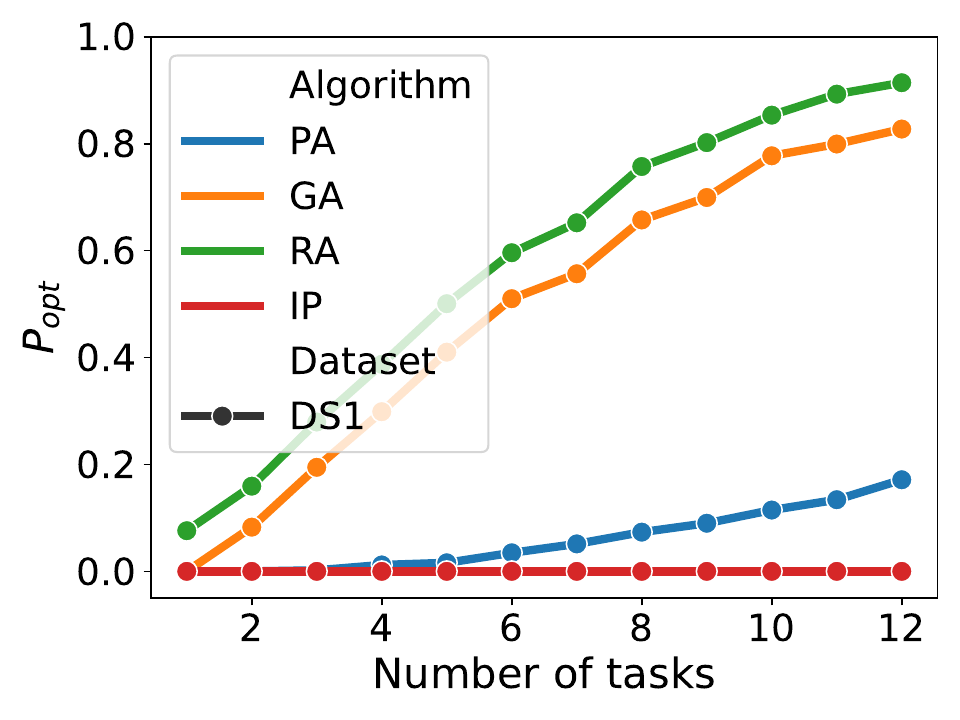}
  \end{subfigure}
  \begin{subfigure}{0.26\linewidth}
\includegraphics[width=1\linewidth]{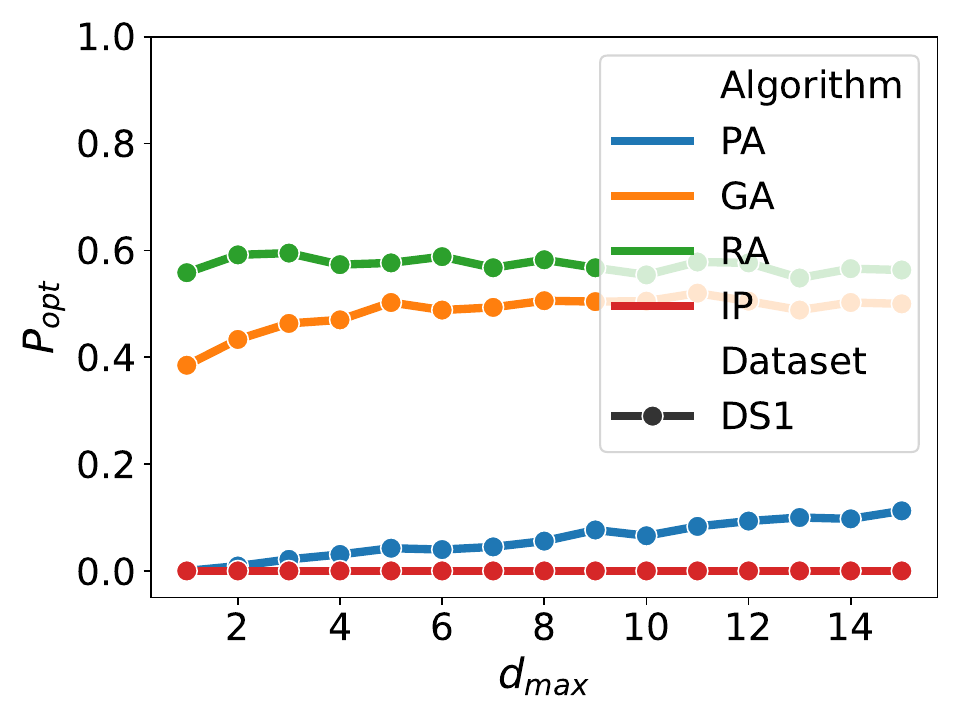}
  \end{subfigure}
\caption{The average $P_{opt}$ for $n=12$, for different numbers of robots (left), numbers of tasks (middle), and the maximum task duration $d_{max}$ (right).}
  \label{fig:non_opt_prop_max_l} 
\end{figure*}

\section{Experiments}

In this section, we perform an experimental analysis of $PA$ comparing it against various scheduling algorithms by their efficiency, scalability, and weaknesses.
The efficiency of a scheduling algorithm can be measured using various metrics.
The first metric we use is the proportion of the fastest schedules ($P_{opt}$) provided by an algorithm $A$.
We compute $P_{opt}$ for $A$ by comparing the timespans of the schedules produced by $A$ with those provided by IP, since IP is the only scheduling algorithm in our comparison that guarantees the optimality of the solution.
Next, we introduce the relative performance $\rho$ of $A$, defined as the timespan of the schedule obtained from $A$ divided by the optimal timespan and averaged.
Finally, we compare the algorithms based on the average (per instance) and total time $T$ required by the algorithms.\looseness=-1

\paragraph*{Scheduling on the path graphs of the order $\leq$~12}

We start with the experiment for small numbers: we consider the path graph of order $n$ from 3 to 12, number of tasks $m$ from 1 to $n$, and task durations drawn from uniform distributions ranging from 1 to $d_{max}$, where $d_{max}$ varies from 1 (all tasks have duration 1) to 15.\looseness=-1

For each tuple $(n, m, d_{max})$, we create 10 path graphs with tasks on them, where all tasks allocated following the uniform distribution. For each graph and set of tasks, we consider $k$ robots allocated uniformly and vary $k$ from 2 to $n-1$. For the graph with 3 vertices, for some combinations of $m$ and $d_{max}$, there are no 10 different sets of tasks, and in this case, we consider the maximum number of different sets of tasks. This gives us the dataset $DS1$ that contains $74,251$ elements in total.
We run PA, IP, GA, and RA on $DS1$ and compare the results by $P_{opt}$, $\rho$, and average and total $T$.
Table \ref{tab:metrics_20} shows the average values for all metrics achieved in this dataset. In the following subsections, we analyse the results from Table \ref{tab:metrics_20} in detail.\looseness=-1

\begin{table}[h]
    \centering
    \scriptsize
    \caption{Performance comparison of scheduling algorithms in the experiment with $DS1$.}
    \label{tab:metrics_20}
    \begin{tabular}{c|c c c c}
    \toprule
    Metric& IP&PA& GA & RA\\
    \midrule
    $P_{opt}$ & 0 & 0.049 & 0.426 & 0.518 \\
    $\rho$ & 1 & 1.004 & 1.167 & 1.255 \\
    Average $T$ & 5.17 & 0.00043 & 0.00017 & 0.00017 \\
    Total $T$ & 383,912.89 & 31.58 & 12.36 & 12.75 \\
    \bottomrule
    \end{tabular}
\end{table}

\noindent
\textbf{Proportion of non-optimal solutions.}

\noindent
We start our analysis with the $P_{opt}$ metric and notice that the proportion of non-optimal solutions for PA is very low (0.049) and approximately 10 times less than other non-optimal algorithms, namely, GA (0.426) and RA (0.518). We then compare evolutions of $P_{opt}$ with the growth of the path length (Figure \ref{fig:non_opt_prop}).
For all methods, $P_{opt}$ increases as the path graph order grows; however, $P_{opt}$ for PA increases slower than the other methods and remains relatively low even at the maximum number of vertices.\looseness=-1

We proceed with the subset of $DS1$ with instances of the maximum graph order $n=12$ and explore how other parameters affect $P_{opt}$ for different methods (see Figure \ref{fig:non_opt_prop_max_l}).
We can see that increasing the number of tasks leads to an increase in $P_{opt}$ for all algorithms, with PA showing consistently better results.
For the number of robots, the correlation is reversed: the highest 
$P_{opt}$ occurs in instances with the smallest number of robots (2 or 3) for all non-optimal algorithms.
Regarding the various tasks' durations, both GA and RA show a high average $P_{opt}$ for all values, while PA shows significantly lower $P_{opt}$ that slowly increases with the growth of $d_{max}$.
These findings show that PA consistently outperforms GA and RA in the $P_{opt}$ metric and is closest to IP, which always provides a fastest schedule.

\noindent
\textbf{Relative performance.}
%
Observe that the relative performance $\rho$ of PA is 1.004, meaning that on average the schedule suggested by PA is just 0.4\% longer than a fastest schedule, in contrast to the GA and RA that give 16.7\% and 25.5\% longer schedules on average. The evolution of $\rho$ with the growth in the number of vertices in Figure \ref{fig:non_opt_prop} confirms that, on average, the timespans of the schedules provided by PA stay close to the optimal timespans provided by IP, while the lengths of schedules provided by GA and RA increase faster.
As with the $P_{opt}$ metric, we also investigate the maximum $n=12$ and show how the number of robots, the number of tasks, and the durations of tasks affect the $\rho$ metric (see Figure \ref{fig:frac_max_l}). Here, we have a similar to $P_{opt}$ trend: $\rho$ of PA is significantly less than those of GA and RA, slowly grows with the tasks number and maximum task duration growth, and reduces with the robots' number growth.
We conclude that while the proportion of non-optimal solutions for PA grows when we scale the problem, the timespans of these non-optimal solutions remain very close to the optimal timespan.\looseness=-1

\begin{figure*}[t]
  \centering
  \scriptsize
  \begin{subfigure}{0.26\linewidth}
\includegraphics[width=1\linewidth]{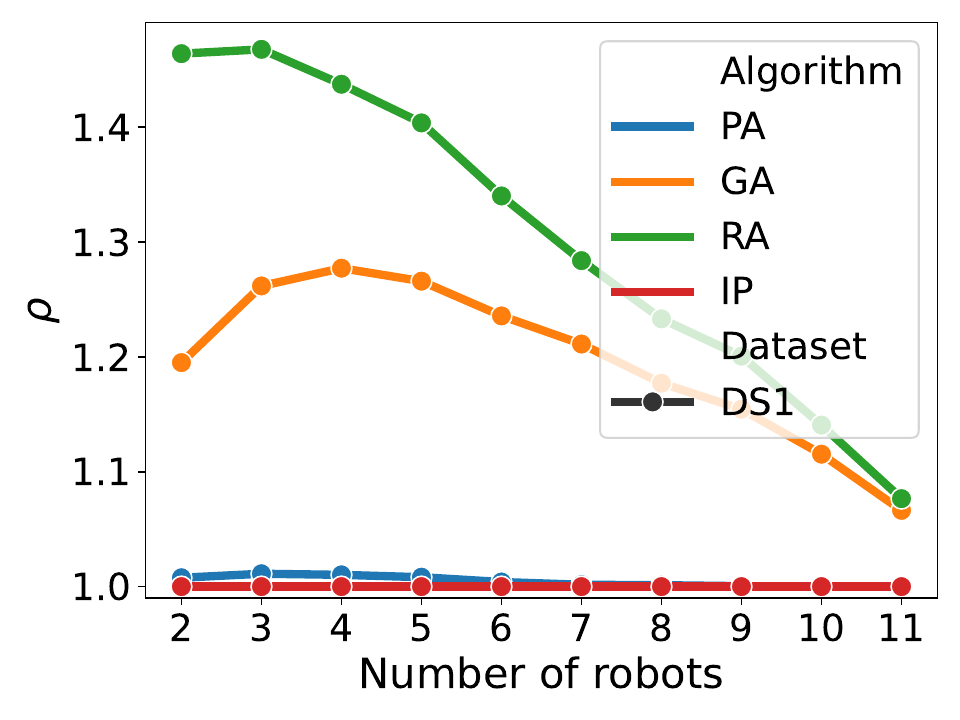}
  \end{subfigure}
  \begin{subfigure}{0.26\linewidth}
\includegraphics[width=1\linewidth]{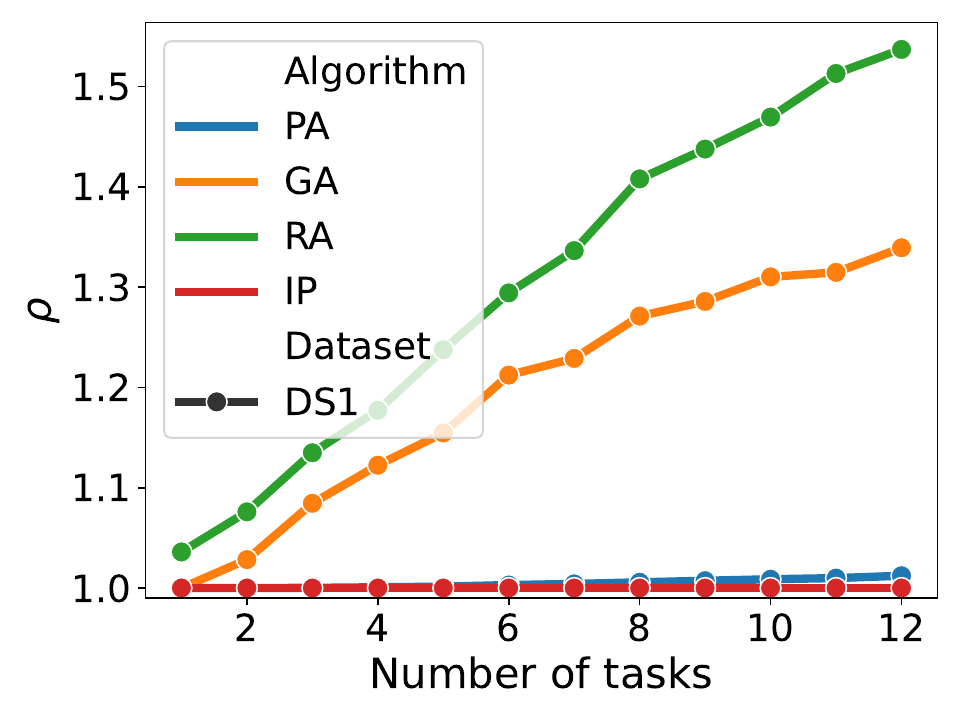}
  \end{subfigure}
  \begin{subfigure}{0.26\linewidth}
\includegraphics[width=1\linewidth]{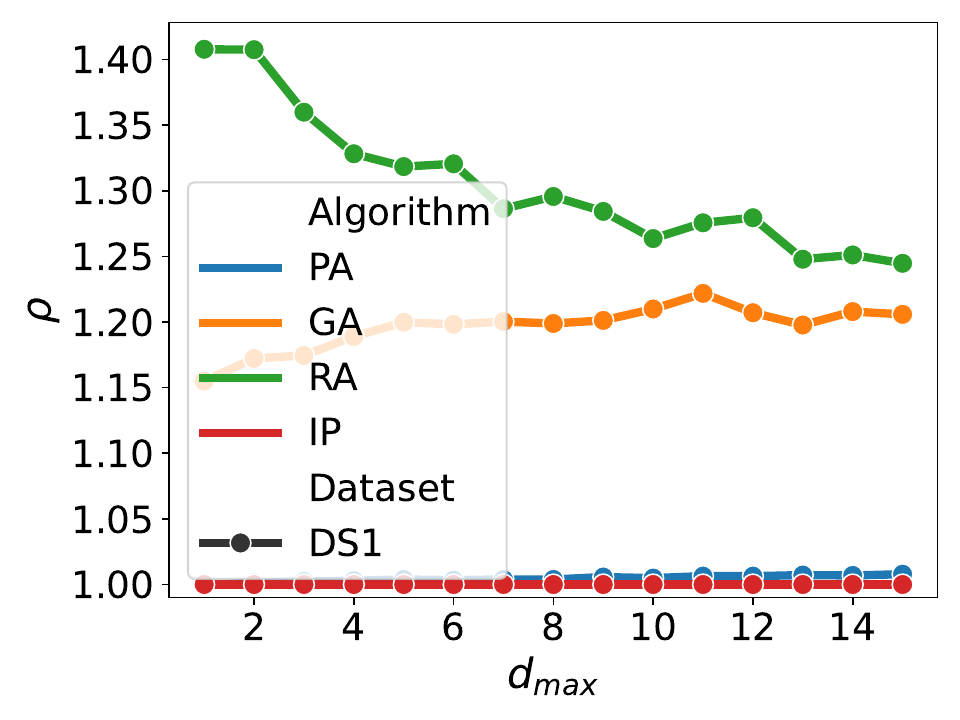}
  \end{subfigure}
  \caption{The relative performance $\rho$ for $DS1$ and $n=12$, for different numbers of robots (left), numbers of tasks (middle), and the maximum task duration $d_{max}$ (right).}
  \label{fig:frac_max_l} 
\end{figure*}
\begin{figure*}
  \scriptsize
  \centering
  \begin{subfigure}{0.24\linewidth}
\includegraphics[width=1\linewidth]{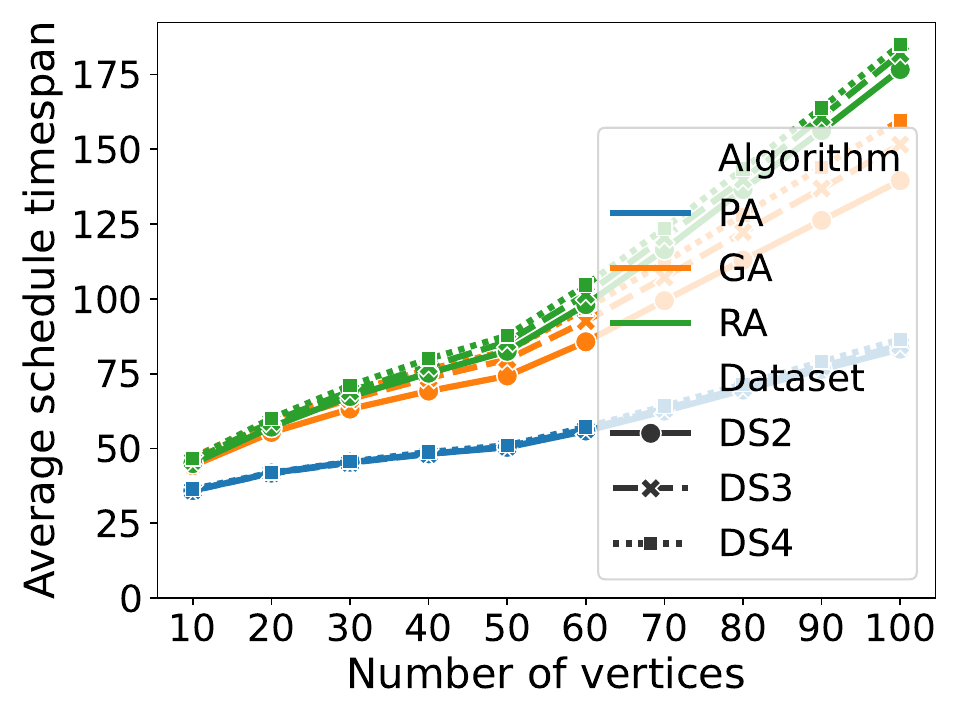}
  \end{subfigure}
  \begin{subfigure}{0.24\linewidth}
\includegraphics[width=1\linewidth]{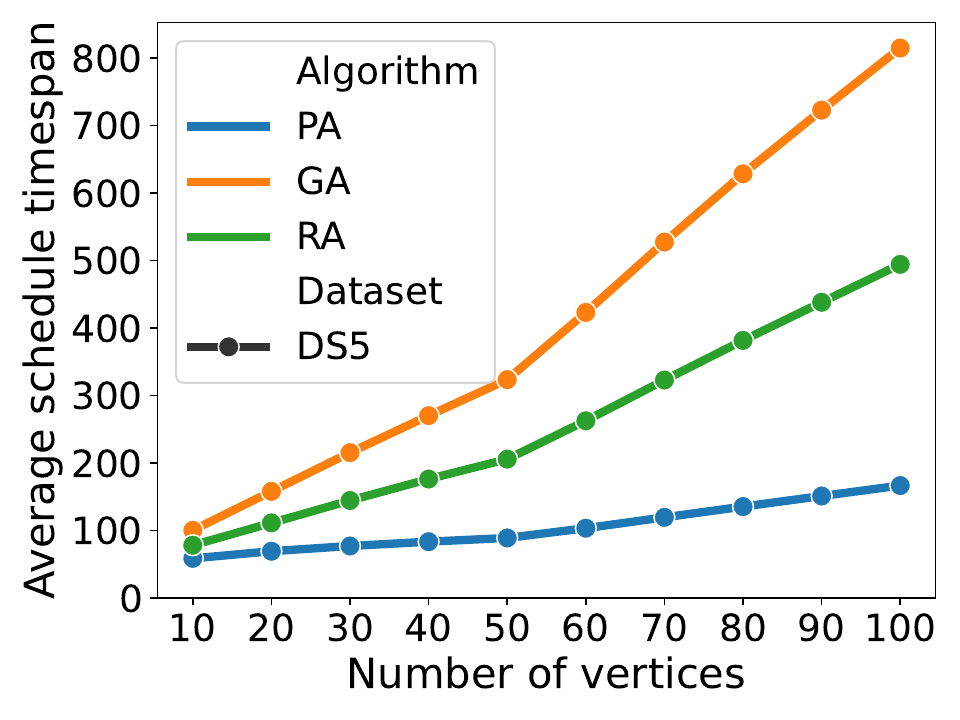}
  \end{subfigure}
  \begin{subfigure}{0.24\linewidth}
\includegraphics[width=1\linewidth]{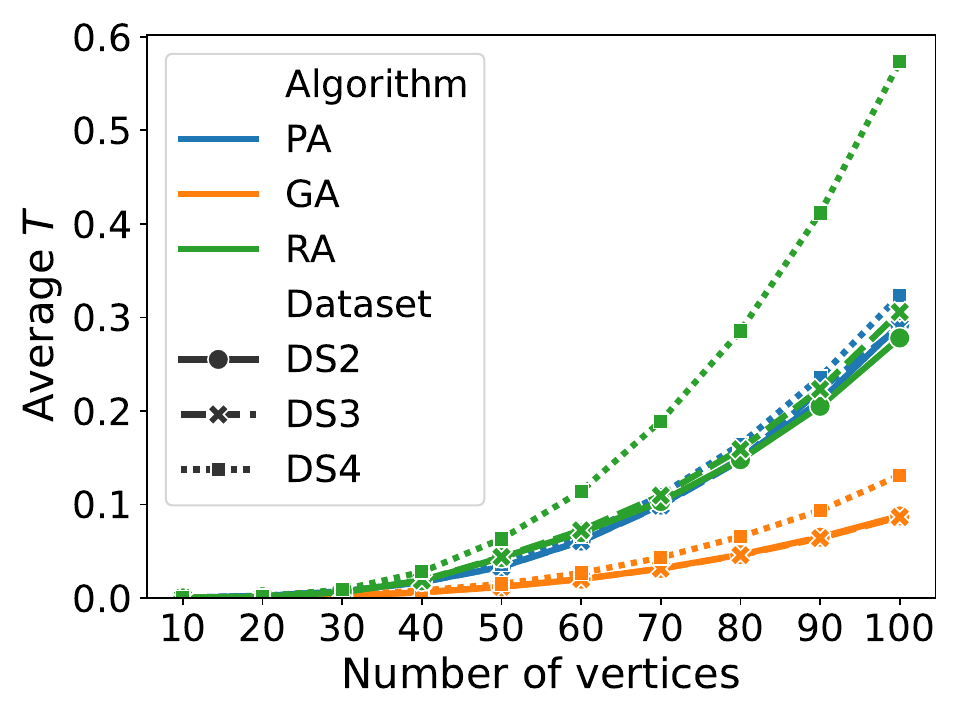}
  \end{subfigure}
  \begin{subfigure}{0.24\linewidth}
\includegraphics[width=1\linewidth]{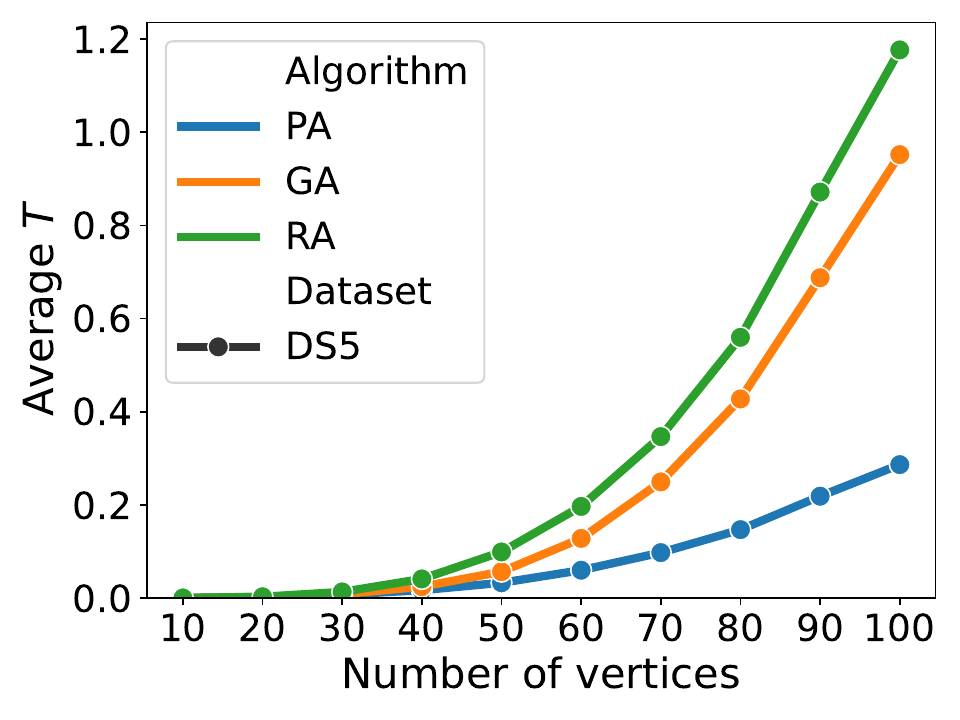}
  \end{subfigure}
  \caption{The average schedule timespan (left) and the average $T$ (right) for different algorithms by the number of vertices for $DS2$-$DS4$ and $DS5$.}
  \label{fig:path_length_1000} 
\end{figure*}

\noindent
\textbf{Computation Time.}
%
%
%
We complete the analysis of the experiment with $DS1$ comparing the methods by the average~$T$ per schedule, and total $T$ spent during the experiment. 
We first notice that IP is at least four orders of magnitude slower than all other methods in scheduling for $DS1$. Moreover, both the total and the average $T$ of IP rapidly grow with the growth of the order of a graph (see Figure \ref{fig:non_opt_prop}). In one particular instance with $n=12$ presented in Appendix~\ref{app:example}, IP required more than 4 hours to find the optimal schedule of timespan 63, whereas PA and GA each produced a schedule of timespan 66 in 0.0006 and 0.017 seconds, respectively. This confirms that the use of IP for task scheduling is inefficient in practice at scale, despite the guaranteed optimality of the resulting schedule.
As such we exclude IP from further large-scale experiments and focus on the performance comparison between non-optimal methods.\looseness=-1

\paragraph*{Scheduling on the longer paths}

The design of the datasets for the scaled experiment is as follows: we consider the path graphs of order $n$ in the range from 10 to 100 with step 10, the number of tasks $m$ from 2 to $n$ with step 2, and the maximum task duration $d_{max}$ from 10 to 50 with step 5. For each combination of graph and tasks, we consider the number of robots $k$ from 2 to 50 in steps of 2. In this experiment, we also consider different variations in the distributions of tasks and robots along the path, as well as variations in task durations, to represent various realistic scenarios. Based on these distributions, we generate four different datasets of the same size ($567,000$ instances).\looseness=-1

The dataset $DS2$ follows the same setup as $DS1$. The dataset $DS3$ represents a case in which tasks of similar duration are located close to each other. Specifically, in $DS3$, the path is divided into two equal parts: one containing all tasks with durations shorter than $\frac{d_{max}}{2}$, and the other containing all longer tasks. The dataset $DS4$ represents a case where certain regions of the path contain a higher density of tasks. Here, the task locations follow a normal distribution with a mean at a randomly selected vertex of the path and a standard deviation of $n/8$. In $DS5$, robots are clustered around a specific point on the path and distributed according to the same normal distribution as the tasks in $DS4$. Table \ref{tab:distr} in Appendix \ref{app:implementation} shows how the new datasets $DS2$-$DS5$ are organized.\looseness=-1

As we excluded IP from this experiment, the minimum possible schedule timespan is unknown, and we cannot compare the algorithms using $P_{opt}$ or $\rho$. Instead, we compare the timespans of the sets of schedules directly -- referred to hereafter simply as the schedule timespan -- and also consider the scheduling time $T$.
Table \ref{tab:sch_length} presents the average schedule timespans achieved by PA, GA, and RA on datasets $DS2$–$DS5$. The results clearly show that PA outperforms both non-optimal baselines, achieving significantly shorter schedules across all datasets. On average, the sets of schedules produced by PA are more than twice as short as those generated by GA and RA, with GA outperforming RA.\looseness=-1

Interestingly, despite its simplicity, RA does not always exhibit the worst performance. Specifically, in the case of $DS5$, RA significantly outperforms GA. In $DS5$, the robots are distributed according to a normal distribution and tend to cluster in a particular region along the path. In this scenario, a robot may effectively divide the path into two sections and claim all tasks from one side, as it is the closest to them, thereby blocking access for robots located on the other side of the path. The greedy strategy, in this case, substantially increases the schedule timespan compared to RA, where tasks are allocated independently of their distance to the robots, as demonstrated in Appendix~\ref{app:example}.\looseness=-1

\begin{table}[h]
    \centering
    \scriptsize
    \caption{The average schedule timespan and the average time per schedule achieved by various algorithms for different datasets. The \textbf{AVERAGE} row presents numbers averaged by all datasets. The best results, in terms of both timespan of the solution and average computation time are bold.}
    \label{tab:sch_length}
    \begin{tabular}{c|c c c|c c c}
    \toprule
    {} & \multicolumn{3}{c|}{Average schedule timespan} & \multicolumn{3}{c|}{Average $T$} \\
    \midrule
    Dataset & PA & GA & RA & PA & GA & RA \\
    \midrule
    $DS2$ & \textbf{66} & 106 & 128 & 0.14 & \textbf{0.04} & 0.14 \\
    $DS3$ & \textbf{67} & 115 & 132 & 0.14 & \textbf{0.04} & 0.16\\
    $DS4$ & \textbf{68} & 121 & 135 & 0.16 & \textbf{0.06} & 0.28\\
    $DS5$ & \textbf{129} & 571 & 350 & \textbf{0.14} & 0.44 & 0.56 \\
    \midrule
    \textbf{AVERAGE} & \textbf{82} & 228 & 186 & \textbf{0.14} & \textbf{0.14} & 0.28\\
    \bottomrule
    \end{tabular}
\end{table}

The evolution of the average schedule timespan with increasing numbers of vertices, shown in Fig.~\ref{fig:path_length_1000}, demonstrates that the average schedule timespan increases with $n$ for all algorithms and datasets. However, for PA, the average schedule timespan remains consistently lower than that of GA and RA across all datasets and values of $n$.

In Appendix \ref{sec:stick_100}, we analyze how the number of tasks, the number of robots, and the maximum task duration affect the performance of the algorithms. The results confirm that PA scales better than GA and RA with respect to the number of tasks and the growth of task durations, and it outperforms both baselines regardless of the number of robots.

\noindent
\textbf{Performance at fixed task-to-robot ratios.}
We investigate the case where the numbers of robots and tasks increase simultaneously while maintaining a fixed ratio between them. From datasets $DS2$–$DS5$, we extract the instances with $n=100$ where this ratio equals 1, 2, 5, or 10. We denote by $DSi_j$ the subset of dataset $DSi$ with the ratio equal to $j$. Figure \ref{fig:fixed_ratio_DS5} shows how the average schedule timespan produced by the different algorithms for the subsets of $DS5$ with fixed ratios evolves as the number of robots (and tasks) increases. We observe that, in all cases, PA consistently outperforms the baselines. Moreover, while the average schedule timespan of PA remains the same or decreases as the number of robots (and tasks) grows, for GA and RA, the average schedule timespan increases linearly and can become up to eight times longer than that of PA for the maximum number of robots and tasks. This indicates that PA effectively utilizes the increasing number of robots and scales better than GA and RA. A significant outperformance of PA over GA and RA on the subsets of $DS2$–$DS4$ is discussed in detail in Appendix \ref{app:fixed_ratio}.\looseness=-1

\begin{figure}
  \centering
  \begin{subfigure}{0.24\linewidth}
\includegraphics[width=1\linewidth]{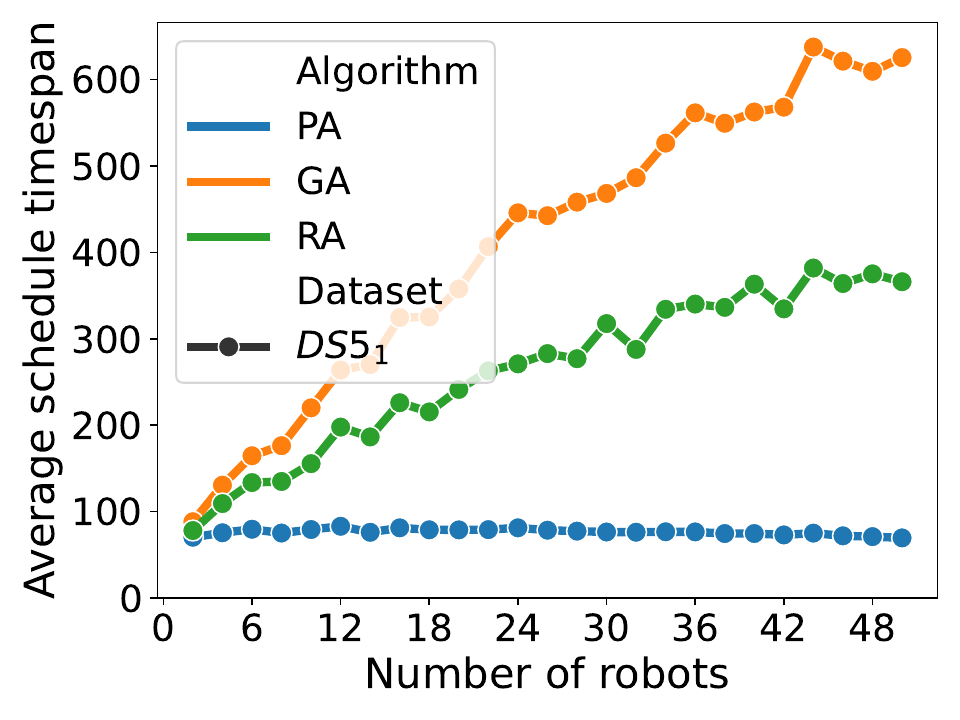}
  \end{subfigure}
  \begin{subfigure}{0.24\linewidth}
\includegraphics[width=1\linewidth]{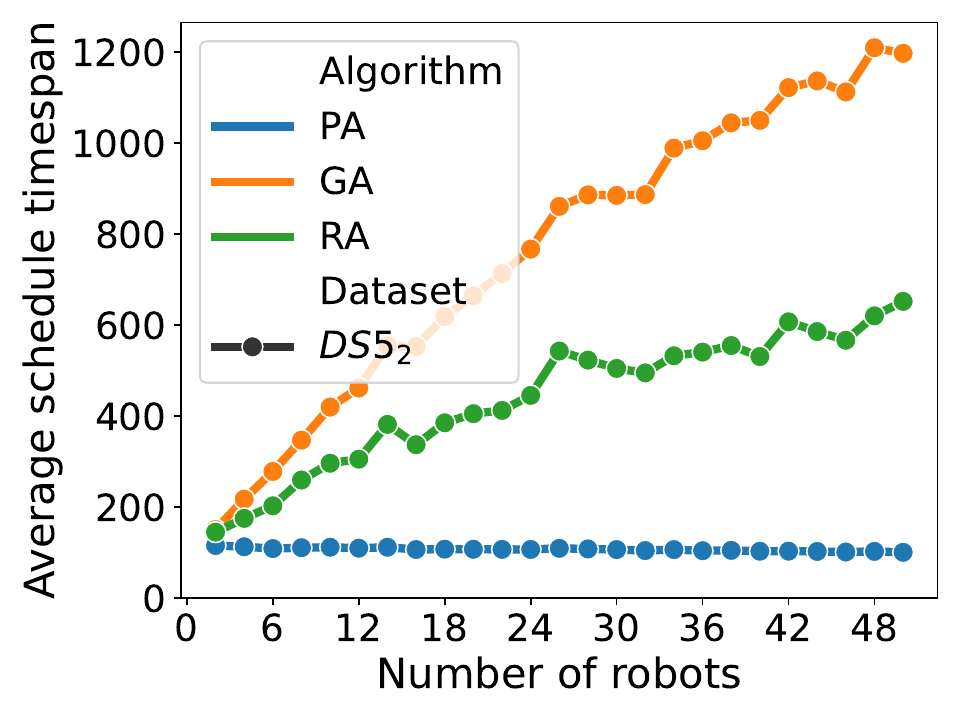}
  \end{subfigure}
  \begin{subfigure}{0.24\linewidth}
\includegraphics[width=1\linewidth]{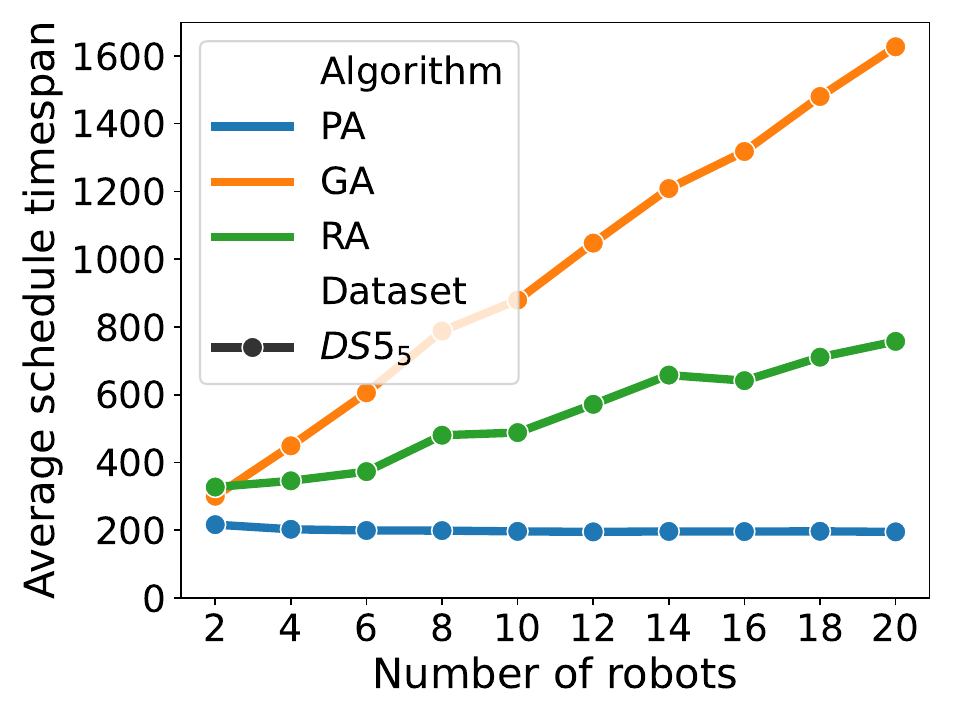}
  \end{subfigure}
  \begin{subfigure}{0.24\linewidth}
\includegraphics[width=1\linewidth]{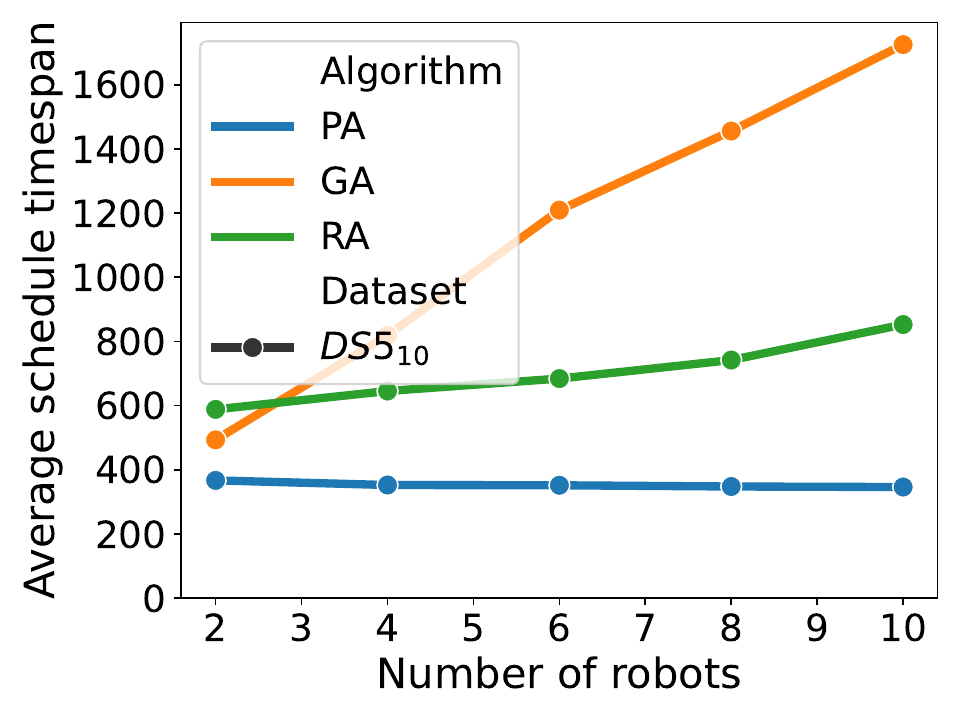}
  \end{subfigure}
  \caption{Experiments $DS5$ subsets with task-to-robot ratios (from left to right): 1, 2, 5, 10.}
  \label{fig:fixed_ratio_DS5} 
\end{figure}

\noindent
\textbf{Scheduling time.}
We compare the average $T$ for scheduling performed by PA, GA, and RA on the datasets $DS2$-$DS5$ in Table \ref{tab:sch_length}. All three algorithms exhibit a good speed of work, and there is no one particular winner for all datasets; however, PA and GA, on average, are twice as fast as RA. We also notice that none of the three algorithms have required more than 12 seconds to find a collision-free schedule for any instance in the $DS2$-$DS5$ datasets, in contrast to IP, which required up to 4 hours for the much shorter path. This confirms that all three non-optimal scheduling algorithms are efficient from the time of work perspective.\looseness=-1

\section{New Hardness Results Beyond Path Graphs} 


\label{sec:path_connecting_tasks_hardness}

In this section, we provide a new hardness result for $k$-\rsp on grid graphs, showing that even for very simple graphs, $k$-\rsp remains a challenging problem, and further explaining our focus on path graphs.

\begin{theorem}
    \label{thm:k-rsp_hardness_grids}
    $k$-\rsp is NP-hard even when the input graph is a grid, all tasks have equal duration, and $k = 1$.
\end{theorem}

\begin{proof}
    Note membership in NP is given by \cite{ourPreviousPaper}. To show hardness for grids, we first note the hardness of the Hamiltonian Path problem on subgrids, graphs that can be \emph{embedded} on grids, shown by Itai et al. \cite{Itai1982}. Formally, the graph $G = (V, E)$ can be embedded in the graph $G' = (V', E')$ if there exists some mapping $f: V \mapsto V'$ such that if $(v, u) \in E$, then $(f(v), f(u)) \in E'$.
    Given an instance of \textsc{Hamiltonian Path} on the subgrid $G = (V, E)$, where $G$ can be embedded on the $n \times m$ grid $G' = (V', E')$, we construct and instance of $1$-\rsp on $G'$. Let $S \subseteq V'$ be the set of vertices corresponding to the embedding of $G$ into $G'$. We construct our new instance of $1$-\rsp by constructing a set of $\vert V \vert$ tasks of duration $1$, with each vertex in $S$ containing a task. We choose, as our starting vertex, some arbitrary vertex in $S$. Observe that there is a solution to the $1$-\rsp instance with a task-completing schedule of length $2\vert V \vert - 1$, if and only if there is a Hamiltonian path in $G'$. In the other direction, any Hamiltonian path on $G'$ can be converted into a task-completing schedule in the $1$-\rsp instance by having the robot complete the task on its starting vertex, then moving along the Hamiltonian path, completing the task on each vertex as it visits it. Thus we have the statement.\looseness=-1
\end{proof}

We strengthen this result by looking at the problem of finding a path connecting all tasks in a graph. We note that, if such a path can be found easily, then we may use these paths along with PA as a heuristic for robot scheduling on a general graph.\looseness=-1

\begin{problem}[\textsc{Path}]
    Given a graph $G=(V,E)$ and a set $S \subseteq V$, does there exist a simple path connecting all vertices in $S$?
\end{problem}

Note that this problem is NP-Complete by reduction from the Hamiltonian Path problem.

\begin{theorem}\label{thm:path_NPC}
The problem {\sc path} is NP-Complete, even when the input graph is a subgrid.
\end{theorem}


Despite the hardness result, we note that there exists a fixed-parameter tractable algorithm, parameterised by the number of vertices in the set $S$.

\begin{theorem}\label{thm:path_polytime}
    The problem {\sc path} can be solved in polynomial time for fixed $m = \vert S \vert$. 
\end{theorem}

\begin{proof}
Any path (if it exists) connecting all vertices in $S$ will visit them in some particular order. We shall examine all possible orderings of the $m$ vertices. Let such an ordering be $v_1, v_2,\dots,v_k$. 
We construct a graph $G'$ such that if there exists $m$ disjoint paths in $G'$ then we have a path connecting all vertices in $S$ in $G$.
 For each vertex $v_i \in S$ we add the pair of vertices $(v_i, v'_i)$ to $G'$. Where $v'_i$ is adjacent to both $v_i$ and the neighbourhood of $v_i$ in $G$. 
We define two vertex sets, $A$ and $B$, with $A = \{v_1,v'_1,v_3,v'_3,\dots\}$ being all odd-numbered vertices and $B = \{v_2,v'_2,v_4,v'_4,\dots\}$ being all the even-numbered vertices.
We also add two vertices $s$ and $t$ to $G'$, with $s$ being adjacent to all vertices in $A$ (other than $v_1$) and $t$ being adjacent to all vertices in $B$.
The problem is now whether or not there exist $m$ internally disjoint paths between $s$ and $t$. This can be done in polynomial time for fixed $m$ by using the $O(n^{m^{5^m}})$ algorithm in \cite{lochet2021} with the graph $G'$ and $m$ copies of the pair $(s, t)$.
In the yes instance, the $m$ disjoint paths will be of the form $(s,v'_1,\dots,v_2,t)$, $s(,v_3,\dots,v'_2,t)$, $(s,v'_3,\dots,v_4,t)$, etc. Hence, in the original graph $G$ we have the path between all vertices in $S$ in the order  $v_1,\dots,v_m$. 
Since we need to do this for all orderings of the $m$ vertices in $S$ this can be done in $O(m!\cdot n^{O(m^{5^m})})$ time.\looseness=-1
\end{proof}



%

\section{Conclusion}

In this paper, we studied the $k$-robot scheduling problem on the path graphs. 
We formulated and proved the correctness of an integer programming model for collision-free robots scheduling on paths.
We presented three scheduling algorithms and performed a detailed experimental analysis and comparison with PA, the method proposed in \cite{ourPreviousPaper}. We showed that for the paths of length up to 12 PA matches an optimal solution for Problem \ref{prob:robot-scheduling} on path graphs in 95.1\% cases and provides collision-free sets of schedules only 0.04\% longer than the optimal timespan, significantly outperforming other non-optimal methods in both metrics. Moreover, PA is at most four orders of magnitude faster than the only method that provides an optimal solution, meaning that PA is more applicable in practice. For the experiments at the scale where no optimal solution was practical to look for, we showed that PA, on average, provides a set of schedules that has half the timespan of those returned by GA and RA, and demonstrates an excellent scalability with respect to the key parameters such as the number of vertices, the number of tasks, and the maximum task duration, as well as utilizes the increasing number of robots more wisely. 
We also tested PA on the datasets with variable tasks positions distributions, robots positions distributions, and tasks durations' distributions as well as different fixed task-to-robot ratios. In all cases, PA consistently outperforms other non-optimal methods in the average schedule timespan, while all non-optimal methods can find a solution in the matter of seconds.
We also provided some new hardness results for the problem of scheduling on grids.
We conclude that PA is a powerful scheduling method for path graphs that is capable to provide a fast collision-free set of schedules for a fraction of time.\looseness=-1



\bibliographystyle{plainurl}
\bibliography{bib}

\appendix

\newpage

\section{Algorithms implementation details}\label{app:implementation}
The code used for the experiments is available at \url{https://github.com/sea26-robots/code}. Datasets analogous to $DS1$–$DS5$ can be generated on the fly by following the code documentation.

For the IP implementation, we used the linear programming solver Gurobi \cite{gurobi}, accessed via the gurobipy library for Python. To accelerate the work of IP, two modifications compared to the algorithm's description in the paper were made. First, the upper bound $\tau$ on the time of a schedule was reduced to the time of the set of schedules provided by PA. As PA works for the fraction of the time required by IP, its usage prior IP does not contribute to the time of work of IP. The second modification was to remove constraints (\ref{eq:no_crossover}) as it would lead to a set of schedules with possible collisions caused by traversing only, for which the collision-free set of schedules of the same length exists according to Lemma \ref{lem:traverse}.

Regarding PA, we made some modifications compared to the description in \cite{ourPreviousPaper} to prevent collisions with stationary robots that may remain without tasks. Specifically, the algorithm formulation in \cite{ourPreviousPaper} suggests using a partition of the tasks to allocate each part to a robot. However, when considering a large number of robots, situations in which one or more robots become redundant and remain without tasks become more common, and potential collisions between moving and stationary robots may occur. To prevent such collisions, we slightly updated the algorithm.

\textbf{Update 1}. First, on the step when we choose the partition for the robot $R_j$ given the subset of tasks $[t_1,\dots,t_i]$, among different partitions providing the sets of schedules of the same timespan, we choose the one that assigns the same task to the nearest robots. For instance, if the partitions $S'$ and $S''$ differ in the assignment of the task $t_i$ to the robot $R_{j-1}$ in $S'$ or $R_{j}$ in $S''$, such that $\dist(t_i, R_j) < \dist(t_i, R_{j-1})$, and result in the sets of schedules of the same duration, we choose the partition $S''$. It prevents a possible collision that would occur if $R_{j}$ was stationary and on the way from $R_{j-1}$ to $t_i$. 

\textbf{Update 2}. Second, given the robot $R_j$, we forbid it to consider partitions that start on the first $j-1$ or finish on the last $k-j$ vertices as such partitions would lead to collisions and would be rejected anyway. 

\textbf{Update 3}. Finally, we use the table $SC$ that stores the set of schedules for the given subsets of tasks and robots. $SC[j, i+1]$ contains the set of schedules for the robots $R_1, \dots, R_{j}$ and the tasks $[t_1,\dots,t_i]$, while $SC[j, 1]$ contains an empty set of schedules. We use the table of sets of schedules $SC$ instead of the table of splitting points $S$ used in \cite{ourPreviousPaper} to construct the schedules directly from the partition.

The pseudocode of PA with the modifications is provided in Algorithm \ref{alg:pa}. The function $len$ returns the duration of the schedule or the set of schedules.

\section{Scheduling problem examples}\label{app:example}

In this section, we consider two examples of a scheduling problem for 2 robots on the path graph with 12 vertices taken from $DS1$ of the experiments. The first example has 8 tasks of different durations and is shown on Figure \ref{fig:example}. For this example, we present sets of schedules provided by the compared algorithms in Table \ref{tab:example}. All algorithms find the solutions of different lengths with GA being the worst (27 steps), IP being the best (18 steps), and PA being just one step longer than IP. The second example of a scheduling problem is shown in Figure~\ref{fig:example2} and presents the instance that was the most time-consuming for IP to solve among all instances in $DS1$.

\section{Hardware usage}

The experiments were conducted on a Linux cluster node equipped with two 84-core AMD EPYC $9,634$ CPUs (3.7 GHz) and 1.5 TB of memory (9 GB per core). For each experiment, a single CPU core was used, while the remaining cores remained idle.

\section{The number of tasks, the number of robots, and the maximum task duration analysis for longer paths}\label{sec:stick_100}

We stick to the maximum path graph order $n=100$ and show that the number of tasks, the number of robots, and the maximum task duration affect the performance of the algorithms similarly as in the experiment with $DS1$ with PA consistently outperforming both GA and RA. Figure \ref{fig:ave_sch_100} (for $DS2$-$DS4$) and \ref{fig:ave_sch_100_DS5} (for $DS5$) show that for all algorithms the average schedule timespan increases with increasing number of tasks or $d_{max}$ and decreasing number of robots.
In all cases, PA demonstrates clear outperformance over GA and RA, except the case with the smallest number of tasks, in which algorithms show comparable performance.

\begin{figure}
  \centering
  \begin{subfigure}{0.49\linewidth}
\includegraphics[width=1\linewidth]{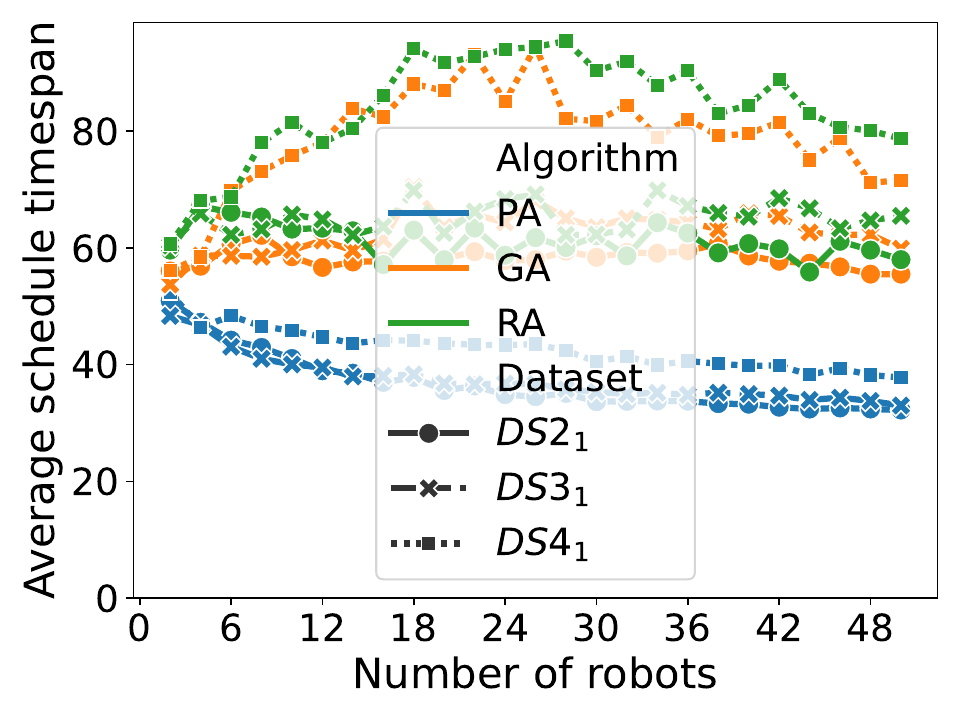}
  \end{subfigure}
  \begin{subfigure}{0.49\linewidth}
\includegraphics[width=1\linewidth]{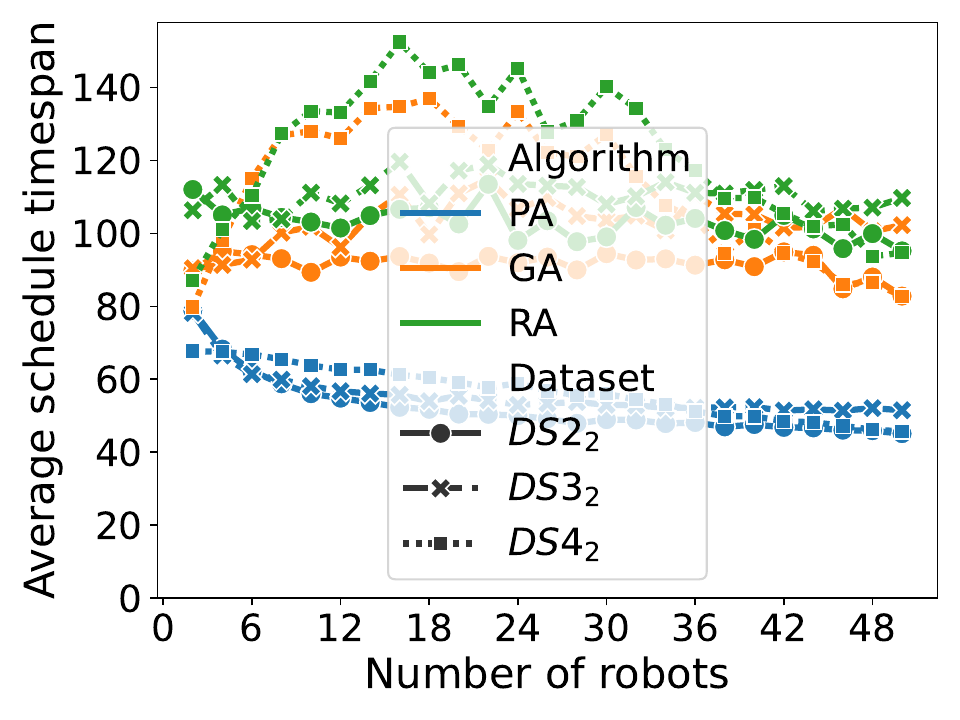}
  \end{subfigure}
  \begin{subfigure}{0.49\linewidth}
\includegraphics[width=1\linewidth]{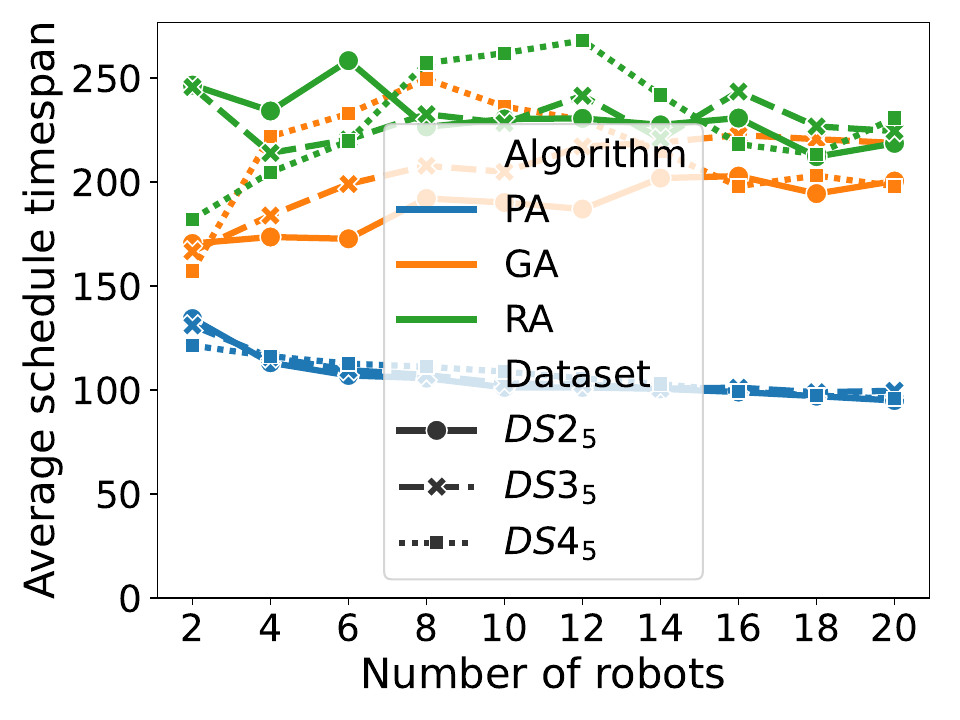}
  \end{subfigure}
  \begin{subfigure}{0.49\linewidth}
\includegraphics[width=1\linewidth]{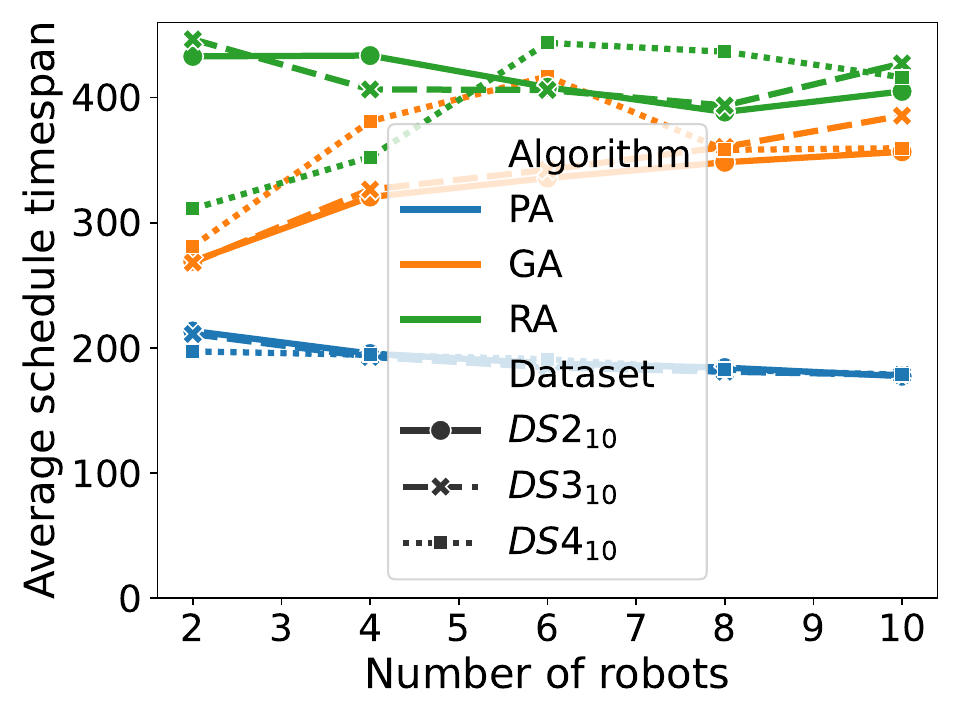}
  \end{subfigure}
  \caption{Experiments with the subsets with different task-to-robot ratios: 1 (top left), 2 (top right), 5 (bottom left), and 10 (bottom right).}
  \label{fig:fixed_ratio} 
\end{figure}

\section{Fixed task-to-robots ratios on the datasets $DS2$-$DS4$}\label{app:fixed_ratio}

The analysis of the subsets of $DS2$–$DS4$ with a fixed task-to-robot ratio reveals a consistent difference in the performance of the algorithms. Figure \ref{fig:fixed_ratio} shows that the average schedule timespan of PA is lower than that of GA and RA for all numbers of robots and tasks with a fixed task-to-robot ratio, and this difference increases as the numbers of tasks and robots grow. The average schedule timespan of PA decreases with the increasing number of robots (and tasks), whereas for GA and RA, the average schedule timespan either decreases more slowly (RA for $DS_{10}$) or even increases (GA for $DS_{10}$).

\begin{algorithm}
\caption{Partition Scheduling Algorithm (\cite{ourPreviousPaper}) with our updates}
\scriptsize
\begin{algorithmic}[1]
    \State Initialize the empty table $SC$
    \For{$j$ from $1$ to $k$}
        \State $SC[j, 1] \gets \emptyset$
    \EndFor
    \textcolor{red}{\For{$i$ from $1$ to $m$}
        \State $SC[1, i] \gets C([t_1,\dots,t_i], sv_0)$\Comment{\textbf{Update 3}}
    \EndFor}
    \For{$j$ from $2$ to $k$} 
        \For{$l$ from $1$ to $m$} 
            \State $min\_len \gets +infinity$
            \State $min\_i \gets 1$
            \For{$i$ from $1$ to $l$} 
                \textcolor{red}{
                \If{($v_i \leq j-1$) or \\ 
                \hspace{5.5em}($v_i > n - k+j$)} \Comment{\textbf{Update 2}}
                    \State \textbf{continue}
                \EndIf}
                \State $cur\_len \gets max(len(SC[j-1,i]),$ \\
                \hspace{11em}$len(C([t_{i+1},\dots,t_{l}], j)))$
                \If{$cur\_len < min\_len$}
                    \State $min\_i \gets i$
                    \State $min\_len \gets cur\_len$
                \textcolor{red}{\ElsIf{($cur\_len = min\_len$) and \\
                \hspace{7.5em}($\dist(sv_j, v_i) > \dist(sv_{j-1}, v_i)$)}\Comment{\textbf{Update 1}}
                    \State $min\_i\gets i$}
                \EndIf
            \EndFor{}
            \State $SC[j,l+1] \gets SC[j-1, min\_i],$ \\
            \hspace{9.5em}$C([t_{min\_i+1},\dots,t_{l}], j))$
        \EndFor{}
    \EndFor
    \State \textbf{return} $SC[k,m+1]$
\end{algorithmic}
\label{alg:pa}
\end{algorithm}

\begin{figure}
    \centering
    \begin{tikzpicture}
    \scriptsize
    \tikzset{vertex/.style = {shape=circle,draw,minimum size=2.5em}}
    \tikzset{edge/.style = {-}}
    \tikzset{arrow/.style = {-}}
    \tikzset{robot/.style = {
    circle,
    draw,
    minimum size=4.5em,
    inner sep=-0.5pt,
   }}
    \node[robot](r_1) at (10, 6.7) {\includegraphics[width=4.5em]{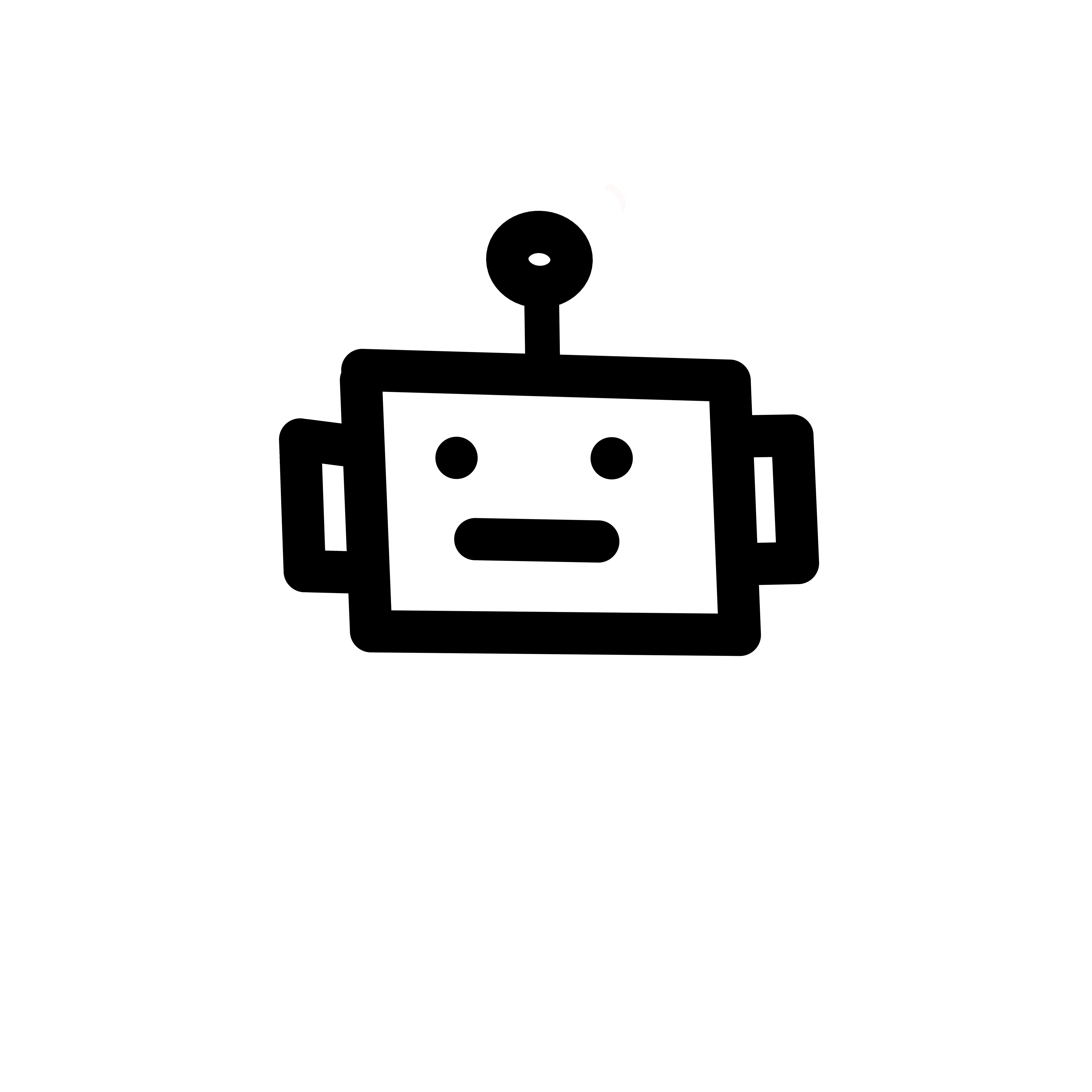}};
    \node[robot](r_2) at (11, 6.7) {\includegraphics[width=4.5em]{figures/robot_image.png}};
    \node[vertex,draw=red](v_4) at (3,6){$v_4,{\color{red}3}$};
    \node[vertex, draw=red](v_1) at (0,6){$v_1,{\color{red}3}$};
    \node[vertex, draw=red](v_5) at (4,6){$v_5,{\color{red}3}$};
    \node[vertex, draw=red](v_6) at (5,6){$v_6,{\color{red}1}$};
    \node[vertex, draw=red](v_7) at (6,6){$v_7,{\color{red}3}$};
    \node[vertex, draw=red](v_8) at (7,6){$v_8,{\color{red}1}$};
    \node[vertex, draw=red](v_9) at (8,6){$v_9,{\color{red}3}$};
    \node[vertex](v_10) at (9,6){$v_{10}$};
    \node[vertex](v_11) at (10,6){$v_{11}$};
    \node[vertex, draw=red](v_12) at (11,6){$v_{12},{\color{red}1}$};
    \node[vertex](v_2) at (1,6) {$v_2$};
    \node[vertex](v_3) at (2,6) {$v_3$};
    \draw[edge] (v_1) to (v_2);
    \draw[edge] (v_2) to (v_3);
    \draw[edge] (v_3) to (v_4);
    \draw[edge] (v_4) to (v_5);
    \draw[edge] (v_5) to (v_6);
    \draw[edge] (v_6) to (v_7);
    \draw[edge] (v_7) to (v_8);
    \draw[edge] (v_8) to (v_9);
    \draw[edge] (v_9) to (v_10);
    \draw[edge] (v_10) to (v_11);
    \draw[edge] (v_11) to (v_12);

    \end{tikzpicture}
      \caption{The example presents a path graph with vertices $\{v_1,\dots,v_{12}\}$ and 8 tasks. Circles indicate vertices with assigned tasks, where the duration of each task is shown in red. Two robots have starting positions $v_{11}$ and $v_{12}$.}
      \label{fig:example}
\end{figure}

\begin{figure}
    \centering
    \begin{tikzpicture}
    \scriptsize
    \tikzset{vertex/.style = {shape=circle,draw,minimum size=3em,
    font=\tiny}}
    \tikzset{edge/.style = {-}}
    \tikzset{arrow/.style = {-}}
    \tikzset{robot/.style = {
    circle,
    draw,
    minimum size=4.5em,
    inner sep=-0.5pt,
   }}
    \node[robot](r_1) at (3, 6.7) {\includegraphics[width=4.5em]{figures/robot_image.png}};
    \node[robot](r_2) at (11, 6.7) {\includegraphics[width=4.5em]{figures/robot_image.png}};
    \node[vertex,draw=red](v_4) at (3,6){$v_4$,${\color{red}10}$};
    \node[vertex, draw=red](v_1) at (0,6){$v_1$,${\color{red}5}$};
    \node[vertex, draw=red](v_5) at (4,6){$v_5$,${\color{red}11}$};
    \node[vertex, draw=red](v_6) at (5,6){$v_6$,${\color{red}7}$};
    \node[vertex, draw=red](v_7) at (6,6){$v_7$,${\color{red}11}$};
    \node[vertex, draw=red](v_8) at (7,6){$v_8$,${\color{red}11}$};
    \node[vertex, draw=red](v_9) at (8,6){$v_9$,${\color{red}13}$};
    \node[vertex, draw=red](v_10) at (9,6){$v_{10}$,${\color{red}10}$};
    \node[vertex, draw=red](v_11) at (10,6){$v_{11}$,${\color{red}11}$};
    \node[vertex, draw=red](v_12) at (11,6){$v_{12}$,${\color{red}5}$};
    \node[vertex, draw=red](v_2) at (1,6) {$v_2$,${\color{red}3}$};
    \node[vertex, draw=red](v_3) at (2,6) {$v_3$,${\color{red}{12}}$};
    \draw[edge] (v_1) to (v_2);
    \draw[edge] (v_2) to (v_3);
    \draw[edge] (v_3) to (v_4);
    \draw[edge] (v_4) to (v_5);
    \draw[edge] (v_5) to (v_6);
    \draw[edge] (v_6) to (v_7);
    \draw[edge] (v_7) to (v_8);
    \draw[edge] (v_8) to (v_9);
    \draw[edge] (v_9) to (v_10);
    \draw[edge] (v_10) to (v_11);
    \draw[edge] (v_11) to (v_12);

    \end{tikzpicture}
      \caption{The example presents a path graph with vertices $\{v_1,\dots,v_{12}\}$ and a task on each vertex. The duration of each task is in red. Two robots have starting positions $v_{4}$ and $v_{12}$.}
      \label{fig:example2}
\end{figure}

\begin{algorithm}
\caption{Greedy Scheduling Algorithm}
\scriptsize
\begin{algorithmic}[1]
    \State Initialize schedules $C_1,\dots,C_k$ by the robots starting positions.
    \State Initialize the list $\pairs$ of all possible pairs ($t_i$, $R_j$) for all tasks $t_i$ and all robots $R_j$.
    \While{$\pairs \neq \emptyset$}
        \State Sort $\pairs$ by the sum of $d_i$ and the distance between $t_i$ and the latest position of $R_j$ in increasing order
        \For{($t_i$, $R_j$) in $\pairs$} 
            \State Extend $C_j$ to $C_j'$ by adding the task $t_i$ to the end of $C_j$.
            \If{($C_1,\dots, C_{j-1},C_j',C_{j+1},\dots, C_k$) is collision-free}
                \State $C_j \gets C_j'$
                \State Remove all pairs with $t_i$ from $\pairs$
                \State \textbf{break}
            \EndIf
        \EndFor{}
    \EndWhile
    \State \textbf{return} ($C_1, \dots, C_k$)
\end{algorithmic}
\label{ref:alg_greedy}
\end{algorithm}

\begin{algorithm}
\caption{Randomised Scheduling Algorithm}
\scriptsize
\begin{algorithmic}[1] 
    \State Initialize schedules $C_1,\dots,C_k$ by the robots starting positions.
    \State Initialize the list $\pairs$ of all possible pairs ($t_i$, $R_j$) for all tasks $t_i$ and all robots $R_j$.
    \While{$\pairs \neq \emptyset$}
        \State Randomly reshuffle $\pairs$
        \For{($t_i$, $R_j$) in $\pairs$} 
            \State Extend $C_j$ to $C_j'$ by adding the task $t_i$ to the end of $C_j$.
            \If{($C_1,\dots, C_{j-1},C_j',C_{j+1},\dots, C_k$) is collision-free}
                \State $C_j \gets C_j'$
                \State Remove all pairs with $t_i$ from $\pairs$
                \State \textbf{break}
            \EndIf
        \EndFor{}
    \EndWhile
    \State \textbf{return} ($C_1, \dots, C_k$)
\end{algorithmic}
\label{ref:alg_random}
\end{algorithm}

\begin{table}[h]
    \setlength{\tabcolsep}{2.3pt} 
    \centering
    \scriptsize
    \newcommand{\task}[1]{\textbf{\textcolor{red}{#1}}}
    \caption{Sets of schedules for the robots in the example on Figure \ref{fig:example}. A trajectory is represented as a sequence of vertex indices, reflecting the robot’s position at a given timestep. Numbers shown in red indicate that the robot performs a task at that timestep.}
    \label{tab:example}
    \begin{tabular}{c|c|c c c c c c c c c c c c c c c c c c c c c c c c c c c c}
    \hline
   \multicolumn{1}{c|}{Algorithm} & Robot & \multicolumn{28}{c}{Schedule} \rule{0pt}{2.8ex}\\
    \hline
    \multicolumn{1}{c|}{IP} & \multicolumn{1}{c|}{R1} & 11& 10& 9& 8& 7& 6& \task{6}& 5& 4& \task{4}& \task{4}& \task{4}& 4& 3& 2& 1& \task{1}& \task{1}& \task{1}\rule{0pt}{2.8ex}\\
     & R2 & 12& \task{12}& 11& 10& 9& \task{9}& \task{9}& \task{9}& 8& \task{8}& 7& \task{7}& \task{7}& \task{7}& 6& 5& \task{5}& \task{5}& \task{5}\\
    \hline
    PA & R1 & 11& 10& 9& 8& 7& 6& 5& 4& \task{4}& \task{4}& \task{4}& 3& 2& 1& \task{1}& \task{1}& \task{1}\rule{0pt}{2.8ex}\\
     & R2 & 12& \task{12}& 11& 10& 9& \task{9}& \task{9}& \task{9}& 8& \task{8}& 7& \task{7}& \task{7}& \task{7}& 6& \task{6}& 5& \task{5}& \task{5}& \task{5}\\
    \hline
    GA & R1 & 11& 10& 9& \task{9}& \task{9}& \task{9}& 8& \task{8}& 7& \task{7}& \task{7}& \task{7}& 6& \task{6}& 5& \task{5}& \task{5}& \task{5}& 4& \task{4}& \task{4}& \task{4}& 3& 2& 1& \task{1}& \task{1}& \task{1} \rule{0pt}{2.8ex}\\
     & R2 & 12 & \task{12}\\
    \hline
    RA & R1 & 11& 10& 9& 8& \task{8}& 7& 6& 5& 3& \task{4}& \task{4}& \task{4}& 5& \task{5}& \task{5}& \task{5}& 4& 3& 2& 1& \task{1}& \task{1}& \task{1} \rule{0pt}{2.8ex}\\
     & R2 & 12& \task{12}& 11& 10& 9& \task{9}& \task{9}& \task{9}& 8& 7& \task{7}& \task{7}& \task{7}& 6& \task{6} \\
     \hline
     \hline
    \multicolumn{2}{c|}{Timestep}& 0& 1& 2& 3& 4& 5& 6& 7& 8& 9& 10& 11& 12& 13& 14& 15 & 16 & 17 & 18 & 19 & 20 & 21 & 22 & 23 & 24 & 25 & 26 & 27 \rule{0pt}{2.8ex}\\
    \hline
    \end{tabular}
\end{table}

\begin{figure}
    \centering
 \includegraphics[width=0.5\linewidth]{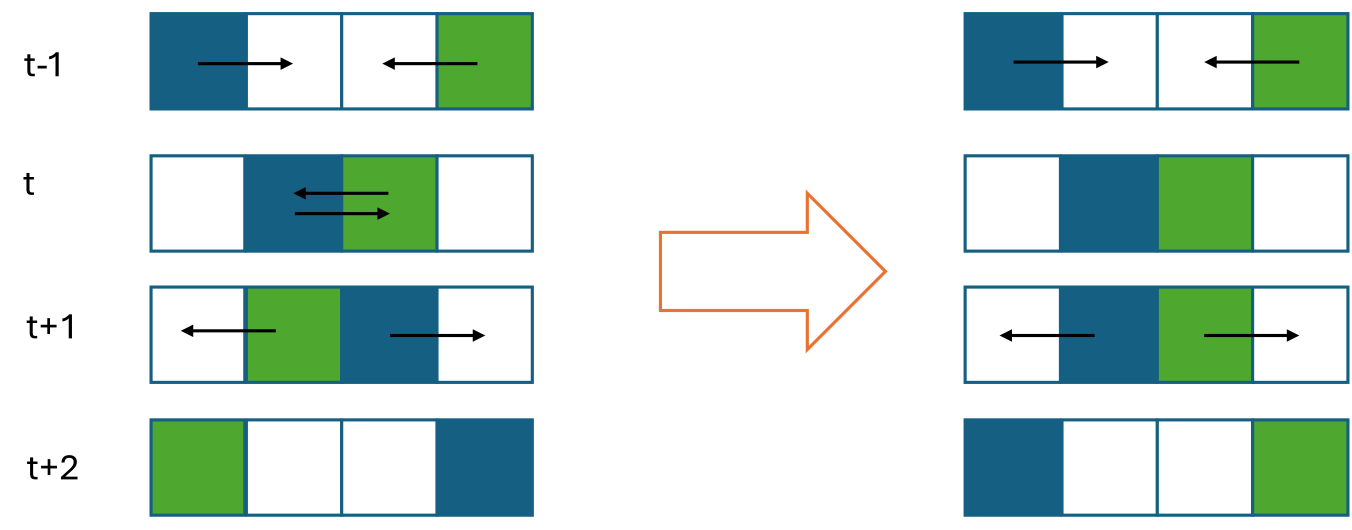}
    \caption{An example of the robots' schedules update to avoid traversing the same edge, the robots are coloured green and blue.}
    \label{fig:traverse}
\end{figure}

\begin{table}[h]
    \centering
    \scriptsize
    \caption{The distributions for tasks and robots allocations as well as the tasks durations for the datasets used in the experiments.}
    \label{tab:distr}
    \begin{tabular}{c|c c c c c}
    \toprule
    Distribution& $DS1$ & $DS2$& $DS3$ & $DS4$ & $DS5$\\
    \midrule
    Tasks durations & Uniform & Uniform &Uneven uniform & Uniform & Uniform \\
    Tasks allocations & Uniform & Uniform&Uniform & Normal & Uniform \\
    Robots allocations & Uniform & Uniform&Uniform & Uniform & Normal\\
    \bottomrule
    \end{tabular}
\end{table}

\begin{figure*}
  \centering
  \begin{subfigure}{0.32\linewidth}
\includegraphics[width=1\linewidth]{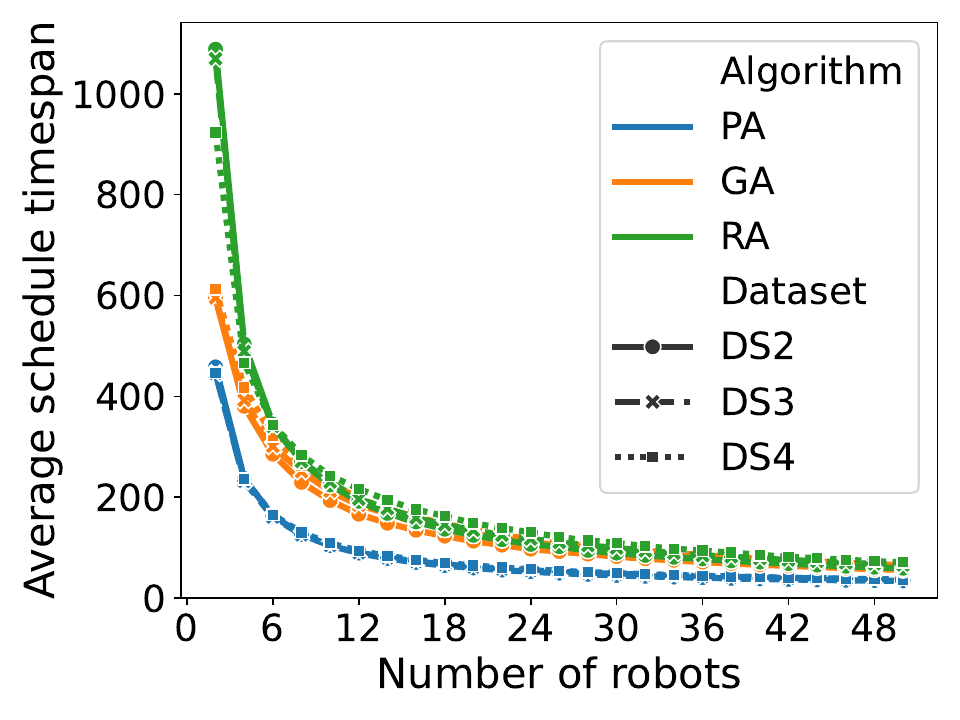}
  \end{subfigure}
  \begin{subfigure}{0.32\linewidth}
\includegraphics[width=1\linewidth]{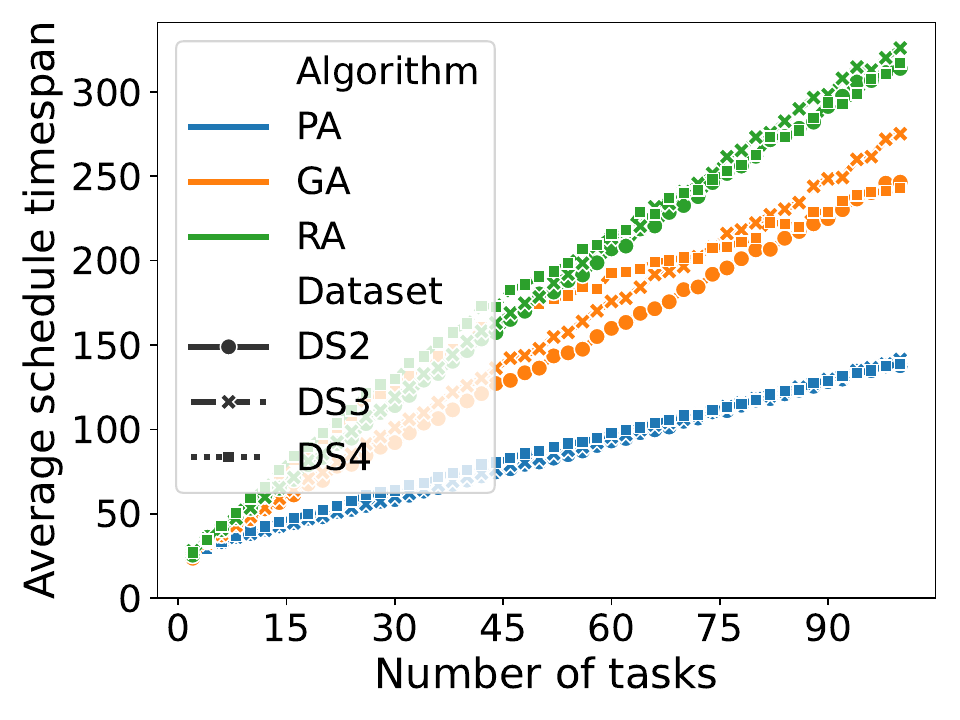}
  \end{subfigure}
  \begin{subfigure}{0.32\linewidth}
\includegraphics[width=1\linewidth]{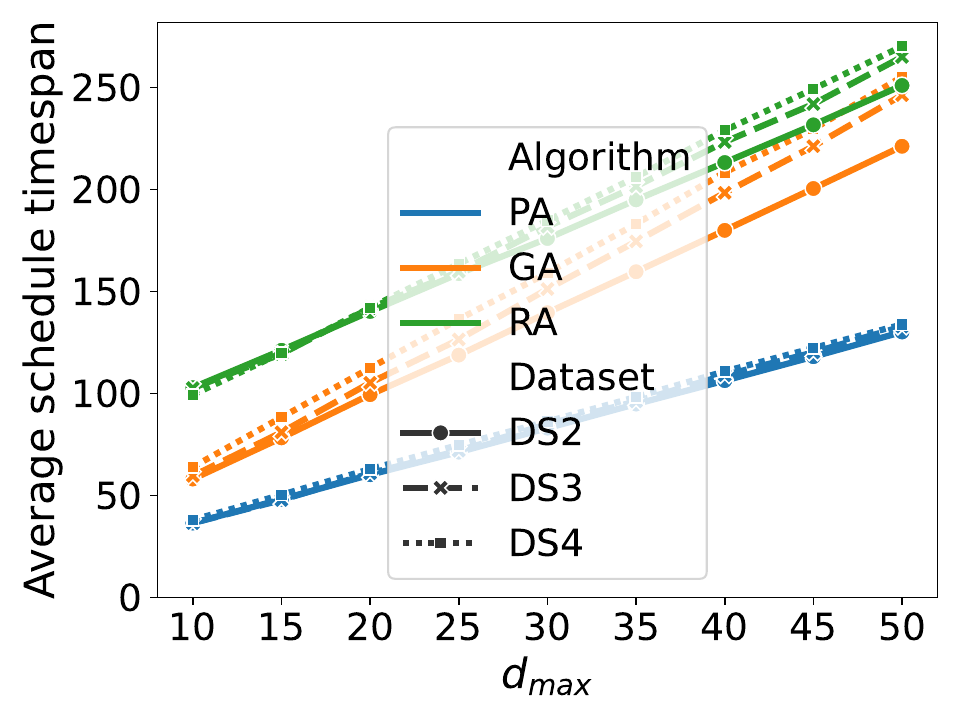}
  \end{subfigure}
  \caption{The average schedule timespan for $n=100$ for the different number of robots (left), number of tasks (middle), and maximum task duration (right) for $DS2$-$DS4$.}
  \label{fig:ave_sch_100} 
\end{figure*}

\begin{figure*}
  \centering
  \begin{subfigure}{0.32\linewidth}
\includegraphics[width=1\linewidth]{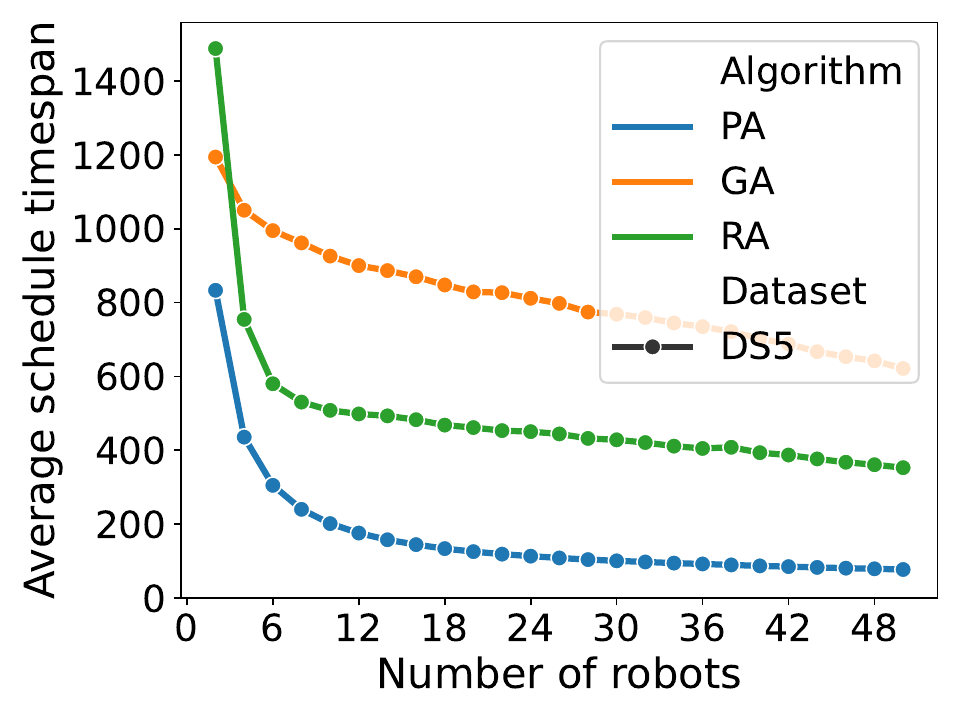}
  \end{subfigure}
  \begin{subfigure}{0.32\linewidth}
\includegraphics[width=1\linewidth]{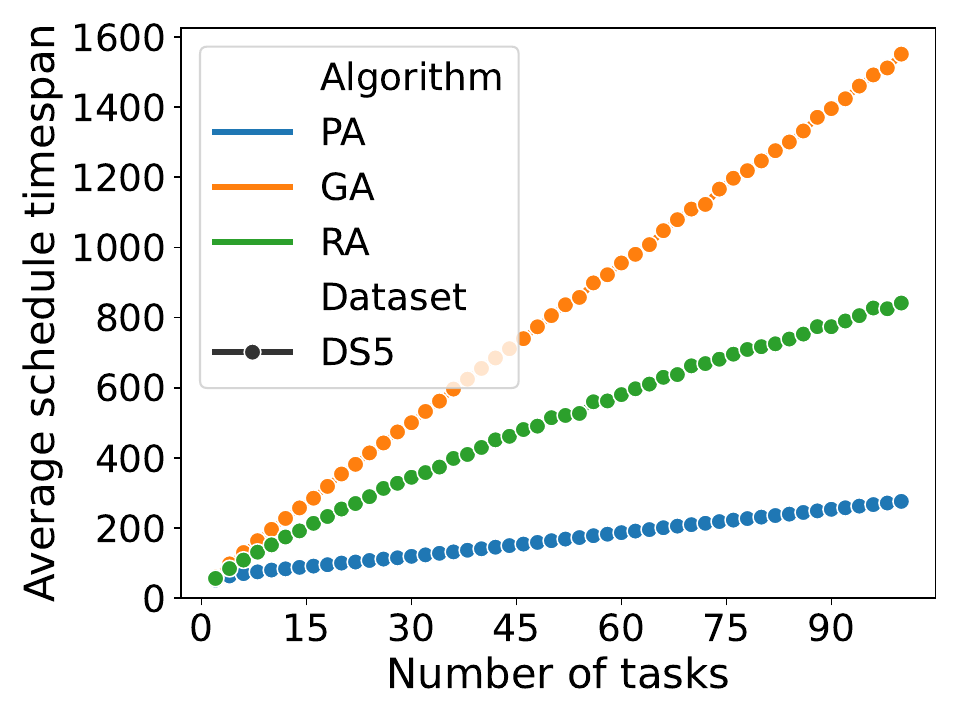}
  \end{subfigure}
  \begin{subfigure}{0.32\linewidth}
\includegraphics[width=1\linewidth]{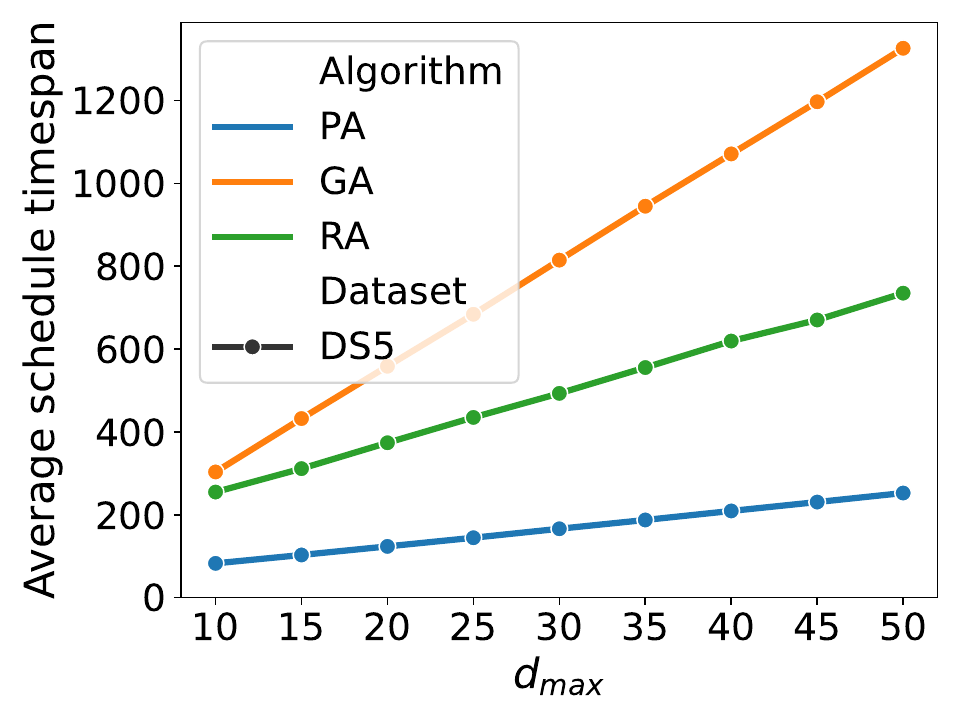}
  \end{subfigure}
  \caption{The average schedule timespan for $n=100$ for the different number of robots (left), number of tasks (middle), and maximum task duration (right) for $DS5$.}
  \label{fig:ave_sch_100_DS5} 
\end{figure*}






\end{document}